%%%%%%%% ICML 2024 EXAMPLE LATEX SUBMISSION FILE %%%%%%%%%%%%%%%%%

\documentclass{article}

\usepackage[dvipsnames,table,xcdraw]{xcolor}

\usepackage{titletoc}
\usepackage[page,header]{appendix}

% Recommended, but optional, packages for figures and better typesetting:
\usepackage{microtype}
\usepackage{graphicx}
\usepackage{subfigure}
\usepackage{booktabs} % for professional tables
\usepackage{multirow}
\usepackage{colortbl}
\usepackage{placeins}
% hyperref makes hyperlinks in the resulting PDF.
% If your build breaks (sometimes temporarily if a hyperlink spans a page)
% please comment out the following usepackage line and replace
% \usepackage{icml2024} with \usepackage[nohyperref]{icml2024} above.
% \usepackage{hyperref}

% Attempt to make hyperref and algorithmic work together better:

% Use the following line for the initial blind version submitted for review:
\usepackage[accepted]{icml2025}

% If accepted, instead use the following line for the camera-ready submission:
% \usepackage[accepted]{icml2024}

% For theorems and such
\usepackage{bm}
\usepackage{nicefrac}
\usepackage{amsmath}
\usepackage{amssymb}
\usepackage{mathtools}
\usepackage{amsthm}
\usepackage{soul}

\definecolor{ballblue}{HTML}{338EA7}
\definecolor{lightseagreen}{HTML}{759D39}
\definecolor{lightred}{HTML}{DD7769}
\definecolor{org}{HTML}{F8A145}
\definecolor{blu}{HTML}{63ACE5}
\definecolor{c1}{HTML}{41B3A3}
\definecolor{c2}{HTML}{3500D3}
\definecolor{test}{HTML}{E5B8FD}
\definecolor{bdy}{HTML}{FDADAD}
\definecolor{corr}{HTML}{B2E1EA}
\definecolor{wrng}{HTML}{CFE8BD}

\usepackage[colorlinks,linkcolor=blue,citecolor=ForestGreen,urlcolor=org, linktoc=all]{hyperref}

% if you use cleveref..
\usepackage[capitalize,noabbrev]{cleveref}
\usepackage{algorithm}
\usepackage{algorithmic}
\usepackage{bm}
\usepackage{enumitem}

%%%%%%%%%%%%%%%%%%%%%%%%%%%%%%%%
% THEOREMS
%%%%%%%%%%%%%%%%%%%%%%%%%%%%%%%%
\theoremstyle{plain}
\newtheorem{theorem}{Theorem}[section]

\newtheorem{lemma}[theorem]{Lemma}

\theoremstyle{definition}
\newtheorem{definition}[theorem]{Definition}

\theoremstyle{remark}
\newtheorem{remark}[theorem]{Remark}

% Todonotes is useful during development; simply uncomment the next line
%    and comment out the line below the next line to turn off comments
%\usepackage[disable,textsize=tiny]{todonotes}
\usepackage[textsize=tiny]{todonotes}

\usepackage{alltt,capt-of}
\usepackage{tcolorbox}
\newenvironment{qbox}
{\begin{tcolorbox}[colback=white, width=0.95\linewidth, center, left=2pt,right=2pt,top=1pt,bottom=1pt]}
{\end{tcolorbox}}

% The \icmltitle you define below is probably too long as a header.
% Therefore, a short form for the running title is supplied here:
\icmltitlerunning{Beyond Progress Measures: Theoretical Insights into the Mechanism of Grokking}

\begin{document}

\twocolumn[
\icmltitle{Beyond Progress Measures:\\Theoretical Insights into the Mechanism of Grokking}

% It is OKAY to include author information, even for blind
% submissions: the style file will automatically remove it for you
% unless you've provided the [accepted] option to the icml2024
% package.

% List of affiliations: The first argument should be a (short)
% identifier you will use later to specify author affiliations
% Academic affiliations should list Department, University, City, Region, Country
% Industry affiliations should list Company, City, Region, Country

% You can specify symbols, otherwise they are numbered in order.
% Ideally, you should not use this facility. Affiliations will be numbered
% in order of appearance and this is the preferred way.
\icmlsetsymbol{equal}{*}

\begin{icmlauthorlist}
\icmlauthor{Zihan Gu}{equal,iie,ucas}
\icmlauthor{Ruoyu Chen}{equal,iie,ucas}
\icmlauthor{Hua Zhang}{iie,ucas}
\icmlauthor{Yue Hu}{iie,ucas}
\icmlauthor{Xiaochun Cao}{sysu}
% \icmlauthor{Firstname6 Lastname6}{sch,yyy,comp}
% \icmlauthor{Firstname7 Lastname7}{comp}
%\icmlauthor{}{sch}
% \icmlauthor{Firstname8 Lastname8}{sch}
% \icmlauthor{Firstname8 Lastname8}{yyy,comp}
%\icmlauthor{}{sch}
%\icmlauthor{}{sch}
\end{icmlauthorlist}

\icmlaffiliation{iie}{Institute of Information Engineering, Chinese Academy of Sciences, Beijing 100093, China}
\icmlaffiliation{ucas}{School of Cyber Security, University of Chinese Academy of Sciences, Beijing 100049, China}
\icmlaffiliation{sysu}{School of Cyber Science and Technology, Shenzhen Campus of Sun Yat-sen University, Shenzhen 518107, China}

% \icmlcorrespondingauthor{Ruoyu Chen}{chenruoyu@iie.ac.cn}
% \icmlcorrespondingauthor{Hua Zhang}{zhanghua@iie.ac.cn}

% You may provide any keywords that you
% find helpful for describing your paper; these are used to populate
% the "keywords" metadata in the PDF but will not be shown in the document
% \icmlkeywords{Machine Learning}

\vskip 0.3in
]

% this must go after the closing bracket ] following \twocolumn[ ...

% This command actually creates the footnote in the first column
% listing the affiliations and the copyright notice.
% The command takes one argument, which is text to display at the start of the footnote.
% The \icmlEqualContribution command is standard text for equal contribution.
% Remove it (just {}) if you do not need this facility.

%\printAffiliationsAndNotice{}  % leave blank if no need to mention equal contribution
\printAffiliationsAndNotice{\icmlEqualContribution} % otherwise use the standard text.

\begin{abstract}
Grokking, referring to the abrupt improvement in test accuracy after extended overfitting, offers valuable insights into the mechanisms of model generalization. Existing researches based on \textit{progress measures} imply that grokking relies on understanding the optimization dynamics when the loss function is dominated solely by the weight decay term. However, we find that this optimization merely leads to token uniformity, which is not a sufficient condition for grokking. In this work, we investigate the grokking mechanism underlying the Transformer in the task of prime number operations. Based on theoretical analysis and experimental validation, we present the following insights: (i) \textit{The weight decay term} encourages uniformity across all tokens in the embedding space when it is minimized. (ii) \textit{The occurrence of grokking} is jointly determined by the uniformity of the embedding space and the distribution of the training dataset. Building on these insights, we provide a unified perspective for understanding various previously proposed progress measures and introduce a novel, concise, and effective progress measure that could trace the changes in test loss more accurately. Finally, to demonstrate the versatility of our theoretical framework, we design a dedicated dataset to validate our theory on ResNet-18, successfully showcasing the occurrence of grokking. The code is released at \url{https://github.com/Qihuai27/Grokking-Insight}.
\end{abstract}

\section{Introduction}
\label{intro}

\textsc{Grokking}, or delayed generalization, refers to the phenomenon where a model’s test accuracy abruptly improves after a prolonged period of overfitting. This phenomenon was first identified by Power \textit{et al.}~\cite{power2022grokking} in the context of operations within a prime number field. Investigating the occurrence of the grokking phenomenon offers valuable insights into the underlying mechanisms of representation learning and generalization in neural networks~\cite{humayun2024deep,chen2024less}.

% Investigations into grokking raise three fundamental questions:
% \begin{itemize}
%     \item \textit{What are the underlying mechanisms driving grokking?}
%     \item \textit{How can the occurrence of grokking be predicted or monitored?}
%     \item \textit{Does this phenomenon hold significant practical or theoretical value?}
% \end{itemize}

% Previous studies have predominantly focused on addressing how can the occurrence of grokking be predicted or monitored, emphasizing the importance of monitoring or predicting grokking as a means to understand its underlying causes.
Previous studies~\cite{power2022grokking,liu2022towards,nandaProgressMeasuresGrokking2022,humayun2024deep} have predominantly focused on predicting or monitoring the occurrence of grokking, emphasizing the importance of these tasks as a means to uncover its underlying causes.
% These efforts are often framed in terms of progress measures, which involve identifying specific activation values within the model to characterize the phenomenon during parameter updates.
Such efforts are often framed in terms of \textit{progress measures}~\cite{nandaProgressMeasuresGrokking2022,clauw2024information,humayun2024deep,furutaInterpretingGrokkedTransformers2024}, which aim to identify specific activation values within the model to characterize the phenomenon during parameter updates.
% However, a critical theoretical gap remains: existing methods fail to adequately explain why a well-designed progress measure can effectively capture the process of grokking. To address the issues above, we are interested in the following problem:
However, a critical theoretical gap persists: existing methods fall short of adequately explaining why a well-designed progress measure can effectively capture the grokking process.
Moreover, the insights gained from these studies often provide only a partial understanding, falling short of offering a comprehensive explanation of the underlying mechanisms~\cite{zhuCriticalDataSize2024,huangUnifiedViewGrokking2024,leeGrokfastAcceleratedGrokking2024,parkAccelerationGrokkingLearning2024}.
To address these limitations, we aim to investigate the following question:
\begin{qbox}
    \begin{center}
    \footnotesize \textbf{\textit{What are the underlying mechanisms driving grokking?}}
    \end{center}
\end{qbox}

\begin{figure*}[t]
    \centering
    \vspace{-8pt}
    \includegraphics[width=\textwidth]{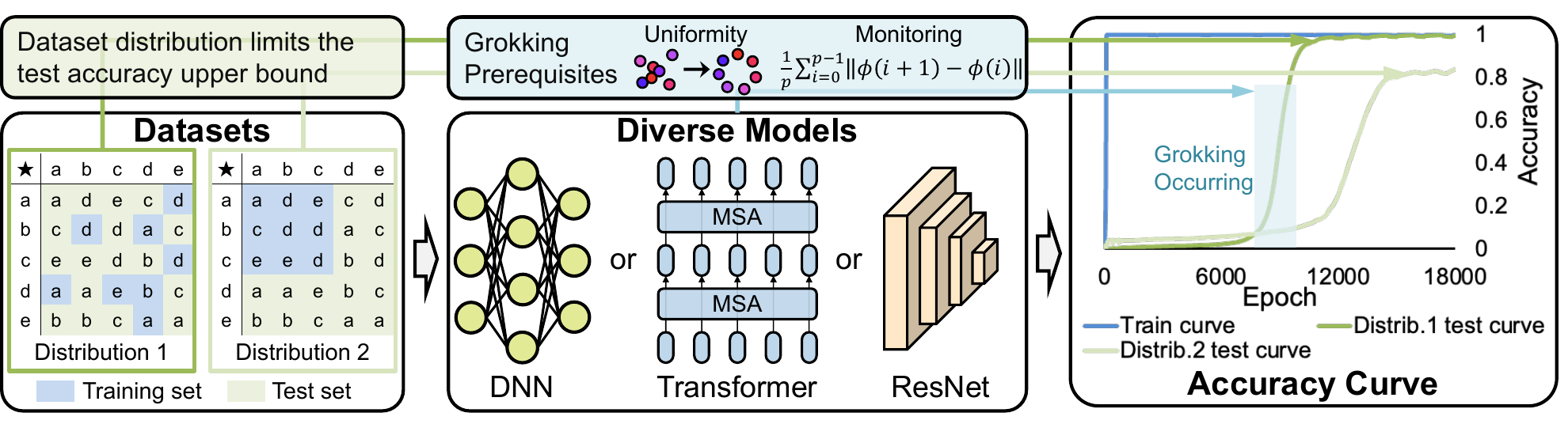}\vspace{-10pt}
    \caption{Grokking occurs from two factors: the \textbf{uniformity} of embeddings under weight decay and the \textbf{dataset distribution}. Embedding uniformity arises from parameter updates guided by minimizing the weight decay term, enabling us to track grokking by monitoring this uniformity. The distribution of the training set determines the upper limit of test accuracy. The above insights are applicable to diverse models, including DNN, Transformer and ResNet.}
    \label{abstract}
    \vspace{-18pt}
\end{figure*}

To answer this question, we investigate the operations within a prime-number field task, as proposed by Power \textit{et al.}~\cite{power2022grokking}, utilizing the single-layer transformer-based decoder~\cite{vaswani2017attention}. The analysis can be divided into two stages: (1) What are the properties of the convergence domain of the model parameters when the loss function includes only weight decay terms? (2) Why does the test accuracy reach 100$\%$ in this convergence domain? A representative kernel-based study considers the former problem by analyzing the effect of the weight decay term in a two-layer MLP \cite{mohamadiWhyYouGrok2024}. However, the proof hinges on the piecewise-linearity of MLPs and the specific properties of the addition task and therefore does not account for grokking when other operations are performed on the prime field. Moreover, there is currently no related research addressing the second question.
In the first stage, we demonstrate that the weight decay term significantly impacts the parameter convergence domain, resulting in uniformity in the embedded values. We further prove that this \ul{\textit{uniformity is the key factor in improving test accuracy}}. Additionally, we establish the necessary and sufficient conditions for grokking to occur in such tasks and design experiments to validate these findings. Building on this, we analyze and demonstrate that the \ul{\textit{the training set distribution defines the upper bound for test accuracy}}. While previous studies often report 100\% test accuracy due to random training set proportions, we show that by deliberately selecting the distribution of the training set, the upper limit of test accuracy can be effectively controlled. Figure~\ref{abstract} provides a conceptual illustration of the factors contributing to grokking.

Expanding on the above theoretical analysis, we observe that previous research on progress measures has primarily focused on approximating the uniformity of embedded values~\cite{liu2022towards,millerMeasuringSharpnessGrokking2024,wangGrokkedTransformersAre2024}. However, methods relying on Fourier decomposition~\cite{nandaProgressMeasuresGrokking2022} or local complexity~\cite{humayun2024deep} are often overly complex and impractical. In contrast, we propose a concise and straightforward progress measure grounded in our theoretical insights. This measure directly monitors the trends in the loss function on the test set, providing a more efficient and practical alternative.

Furthermore, we demonstrate that any network whose feature operations become quasi-linear after capturing the main features will inevitably exhibit grokking on certain tasks. This implies that deeper networks may also be capable of exhibiting grokking. However, existing reports of this phenomenon are largely confined to relatively small networks, such as shallow DNNs or smaller transformers~\cite{power2022grokking,liu2022towards,gromov2023grokkingmodulararithmetic,Lyu2024dichotomy}. To show that grokking can indeed occur in deeper architectures, we constructed a novel task based on our theoretical findings and successfully demonstrated grokking on ResNet-18~\cite{he2016deep}, further validating the robustness and completeness of our theoretical framework.

% Although our proof is based on a single-layer transformer architecture, we only leverage an abstract property of the model. Any network sharing this property will inevitably exhibit grokking on specific tasks. A prime example is that deep neural networks with only a few layers frequently display this behavior across various tasks. In fact, all current deep networks possess this property. However, until now, reports of grokking have been limited to relatively small networks. In our work, we address this issue directly: leveraging our theoretical result, we construct a task that induces grokking even in a network with a large number of layers and parameters. Ultimately, we accomplish this by demonstrating grokking on ResNet-18~\cite{he2016deep}.

Finally, we explore whether this research can enhance our understanding and improvement of network architectures. Specifically, we investigate how the transformer model encodes and processes knowledge from the dataset, defining three types of separability: topological separability, convex separability, and projected convex separability. The grokking phenomenon observed in operations over prime fields indicates that the transformer tends to encode the knowledge of the training dataset in a projected convex separable form. Interestingly, the grokking phenomenon diminishes as the number of transformer layers increases. This suggests that the convex hull representing the stored knowledge becomes smaller as the architecture deepens, leading us to speculate that deeper transformer architectures possess greater knowledge capacity.

% Finally, we consider the question of broader impact and real-world significance:
% \begin{qbox}
%     \begin{center}
%     \footnotesize \textbf{\textit{What practical application value do these findings and this line of research hold?}}
%     \end{center}
% \end{qbox}
% We hypothesize that such research may shed light on how transformer models encode and process knowledge from datasets. To explore this, we define three types of separability: topological separability, convex separability, and projective convex separability. The grokking phenomenon observed in operations over a prime number field suggests that transformers tend to encode the knowledge of training datasets in a projective convex separable form. Interestingly, as the number of transformer layers increases, the grokking phenomenon diminishes. This implies that the convex hull representing stored knowledge becomes smaller with deeper architectures. Consequently, we can infer that transformers with more layers exhibit greater knowledge capacity. Nevertheless, deeper insights and conceptual understanding require further investigation.

To summarize, our contributions are as follows:
\begin{itemize}[itemsep=2pt,topsep=0pt,parsep=0pt]
    \item We present a comprehensive mathematical proof of the underlying mechanism behind grokking, demonstrating that the uniformity of embedded values drives grokking, while the distribution of training data determines the upper bound of test accuracy.
    % moving beyond the existing methods that merely observe or measure the phenomenon.
    \item We provide a systematic perspective on understanding progress measures associated with grokking and propose a novel, concise, and efficient approach for monitoring trends in the test set's loss function.
    \item We construct a novel task and successfully demonstrate the grokking phenomenon on the deep network ResNet-18, further validating the completeness of our theoretical framework.
    \item We propose a new conjecture on how the transformer architecture processes knowledge from the training set and offer a structured representation of this mechanism.
\end{itemize}

% The structure of this article is as follows. In Section~\ref{math_analysis}, we provide a complete mathematical analysis of a single-layer transformer performing arithmetic in a prime field, demonstrating how the direction imposed by weight decay is linked to improvements in test accuracy. In Section ~\ref{progress measure}, we propose a unified perspective for understanding various progress measurement methods and introduce our own progress measure. In Section ~\ref{region}, we discuss how the transformer captures knowledge from the dataset and put forward our conjectures. And section ~\ref{experiments} presents detailed experiments that validate our theoretical findings.

\section{Related Work}
\label{related}

% \subsection{Grokking}
\textbf{Grokking:} Grokking was first proposed in addition over a prime field~\cite{power2022grokking}, which happened on a model of a two-layers transformer decoder. Initial understanding of grokking focused on the size of the dataset and some studies proposed the concept of 'critical dataset size'~\cite{zhuCriticalDataSize2024,huangUnifiedViewGrokking2024}. The essence of these methods is to explain mutation behavior through the linear variation of dataset size. Some studies have also recognized that grokking might be a common phenomenon in classification tasks, prompting researchers to approach the problem from a structural perspective~\cite{liu2022towards,thilakSlingshotMechanismEmpirical2022a}. However, the number of classification tasks in which the grokking phenomenon has been observed is limited, leading some to speculate that grokking is a result of the Transformer architecture~\cite{wangGrokkedTransformersAre2024}. This view was quickly overturned by the grokking phenomenon that appeared on DNNs~\cite{gromov2023grokkingmodulararithmetic,Lyu2024dichotomy,mallinarEmergenceNonneuralModels2024}. Some studies also focus on changing the architecture to investigate variations in the phenomenon~\cite{parkAccelerationGrokkingLearning2024,kuninGetRichQuick2024,leeGrokfastAcceleratedGrokking2024}. And there have always been researchers who associate grokking with emergence, and notable work in this area~\cite{mallinarEmergenceNonneuralModels2024,heLearningGrokEmergence2024,zhaoUncoveringHowLarge2024}.

\textbf{Progress Measure:} The concept of \textit{progress measures} was first introduced in research on sparse parities~\cite{barakHiddenProgressDeep2022}, essentially as a smooth function of continuous changes in activation values used to predict model behavior. The first perspective is based on frequency and Fourier coefficients, inspired by circuit signal analysis~\cite{nandaProgressMeasuresGrokking2022,zhouRationaleFrequencyPerspective2024,furutaInterpretingGrokkedTransformers2024}. The second perspective is local complexity, derived from linear region analysis~\cite{humayun2024deep}. The third perspective is based on information theory~\cite{clauw2024information}. Recently, some methods have aimed to provide a more unified 
perspective~\cite{yunisApproachingDeepLearning2024,songUnveilingDynamicsInformation2024a}.

\section{Why Grokking Happens: A Complete Mathematical Analysis}\label{math_analysis}

We begin by introducing an operator-based formulation of the problem  in Section~\ref{pre}, which we then decompose into two distinct steps. In Section~\ref{pro}, we analyze how the weight decay term in the loss function guides the model parameters to converge after the training accuracy reaches 100\%. In Section~\ref{acc}, we explore why this specific convergence direction results in improved test accuracy. Detailed proofs of the theorems are provided in the Appendix.

\subsection{Preliminaries}
\label{pre}

%这段介绍一下任务
Our research is based on operation tasks on the prime number field $\mathbb{Z}_p$, focusing on the problem of outputting $f(i,j) \; mod \; p$, where $f$ is a two-variable polynomial function and $i,j \in \mathbb{Z}_p$. This task is performed by a single-layer transformer on inputs of the form $(i, j, \mathsf{cls})$. Unless otherwise specified, in the following text we consider $f(i,j)=i+j$ and denote the remainder of $f(i,j)$ modulo $p$ by $r(i,j)$.

% We are based on the prime number field operation task for research, our analysis focuses on the task to output $f(i,j) \; mod \; p$ where $f$ is a polymomial functions and $I = J = \mathbb{Z}_p$, performed by a single-layer transformer on inputs of the form \((i, j, \mathsf{cls})\). As in most prior work, we consider $f(i,j)=i+j$ and denote the remainder of $f(i,j)$ modulo $p$ by $r(i,j)$.

%这段用算子代替矩阵写前向传播过程
For simplicity, we adopt a formal operator framework and use geometric terminology to characterize the model’s operational process. A single-layer transformer decoder comprises the following operators: the identity operator \(I_d\) (corresponding to the residual connection), the normalization operator \(n\) (corresponding to attention score normalization), the truncation operator \(r\) (corresponding to the MLP activation function), linear operators \(l_O, l_V, l_1, l_2, l\), and the bilinear operator \(\beta_{QK}\) (corresponding to attention weight calculation). Consequently, the transformer's two primary component expressions can be written as follows:
\begin{equation}
     \mathsf{SelfAtt}(X)=I_d(X) + l_O \circ (n\circ \beta_{QK}(X,X)\cdot l_V(X)),
\end{equation}
\begin{equation}
    \mathsf{MLP}(X) = I_d(X) + l_1 \circ r \circ l_2(X).
\end{equation}

We consider the dataset $I \times J$ and $I = J = \mathbb{Z}_p$ with embedding function $\phi:\mathbb{Z}_p \to \mathbb{R}^n$. So the token input into the transformer has the form of $(\phi(i),\phi(j),\phi(\mathsf{cls}))$ where $i \in I$ and $j \in J$. We take $I' \times J'$ as the training set and $I'' \times J''$ as the test set. Now we define an equivalence relation in the dataset.

%定义等价类
\begin{definition}
   Define the relation \(\sim\) on pairs \((i, j)\) with \(i \in I\) and \(j \in J\) by
   \begin{equation*}
    (i_1, j_1) \sim (i_2, j_2)
   \quad\Longleftrightarrow\quad
   r(i_1, j_1) \;=\; r(i_2, j_2).   
   \end{equation*}  
\end{definition}

%重描述任务目标，引出两个引理
Each element in the quotient $I \times J / \sim$ represents an equivalence class of $I \times J $  with respect to $\sim$. We call the model \textbf{masters} $I \times J / \sim$ if it can output correct $r(i,j)$ on the whole set $I \times J$. The learning goal of our model is to master $I'' \times J'' / \sim$ through learning $I'\times J'/\sim$. Next, two preliminary lemmas are introduced as a foundation for the proof. Lemma~\ref{lem:softmax} provides an alternative representation of the optimization objective, while Lemma~\ref{lem:norm} shows that changes in the operator norm are directly related to changes in the weight decay term.

%引理1给出了新的优化目标
\begin{lemma}
\label{lem:softmax}
    Let $(\bar{\phi(i)},\bar{\phi(j)},\bar{\phi(\mathsf{cls})} )= \mathsf{MLP}  \circ \mathsf{SelfAtt} (\phi(i),\phi(j),\phi(\mathsf{cls}))$ and $\bar{c_{i,j}} = l(\bar{\phi(\mathsf{cls})})$. If the model can give the correct output, there exists a set of vectors $\{c_i\},i=0,1,...,p-1$ and the relevant small positive number $\epsilon_i$ such that $\|\bar{c_{i,j}}-c_{r(i,j)} \|< \epsilon_{r(i,j)}$.
\end{lemma}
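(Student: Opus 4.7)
The plan is to read Lemma~\ref{lem:softmax} as a finite-set clustering statement about the pre-softmax logits $\bar{c_{i,j}}$. The phrase ``correct output'' on $I\times J$ means that each logit vector $\bar{c_{i,j}}$ has its $r(i,j)$-th coordinate strictly maximal, so every pair in the equivalence class $\{(i,j):r(i,j)=k\}$ produces a logit that decodes to the same class $k$. All such logits can therefore be enclosed in a common neighborhood, and the task is simply to exhibit such a center $c_k$ and radius $\epsilon_k$ explicitly.

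First, for each $k\in\{0,1,\dots,p-1\}$ I will form the finite set $S_k=\{\bar{c_{i,j}} : r(i,j)=k\}\subset\mathbb{R}^p$, which has at most $p$ elements since the addition operator has exactly $p$ solutions to $i+j\equiv k\pmod p$ in $\mathbb{Z}_p\times\mathbb{Z}_p$. Next, I will set $c_k$ to be the centroid $c_k=\tfrac{1}{|S_k|}\sum_{v\in S_k}v$ (any distinguished representative would serve equally well, but the centroid minimizes the covering radius). Finally, I will put $\epsilon_k=\max_{v\in S_k}\|v-c_k\|+\delta_k$ for a sufficiently small $\delta_k>0$. By construction $\|\bar{c_{i,j}}-c_{r(i,j)}\|<\epsilon_{r(i,j)}$ for every $(i,j)$, which is exactly the conclusion.

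The one delicate point, and the step I expect to need most care on, is guaranteeing that the $\epsilon_k$ produced this way are genuinely ``small'' in the sense implicitly required by the later use in Section~\ref{pro}, namely small enough that every vector within distance $\epsilon_k$ of $c_k$ still classifies as $k$ under softmax. I propose to choose $\delta_k$ strictly less than half the minimum classification margin on $S_k$, i.e.\ half the gap between the top and second-highest coordinates over $v\in S_k$. This margin is strictly positive on the finite set $S_k$ by the correctness hypothesis, so the choice is legal; it then forces the entire ball $B(c_k,\epsilon_k)$ to lie inside the correct-argmax region, and the centers $c_k$ become stable reference points for the subsequent weight-decay analysis. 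Beyond this calibration there is no real mathematical obstacle: the lemma is essentially a bookkeeping device that repackages ``correct classification'' as ``clustering of logits into $p$ cells'', and its value lies in setting up the geometric language used in the theorems that follow rather than in any nontrivial estimate.
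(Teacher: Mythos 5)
Your approach and the paper's are both finite-set clustering arguments based on the observation that correct classification forces every $\bar{c_{i,j}}$ into the cone $V_{r(i,j)}=\{x:x_{r(i,j)}>x_s,\;\forall s\neq r(i,j)\}$, so for each class $k$ the finitely many logits can be enclosed in a ball whose center and radius supply $c_k$ and $\epsilon_k$. Where you differ is in how the ball is produced: the paper fixes the \emph{inscribed sphere} of $V_k$ and then asserts that ``an appropriate regular mapping'' brings all of $S_k$ inside it, taking that sphere's center and radius as $c_k,\epsilon_k$; you instead take $c_k$ to be the empirical centroid of $S_k$ and $\epsilon_k$ the covering radius plus a margin-calibrated slack $\delta_k$. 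Your construction is more explicit and sidesteps the paper's hand-wavy ``regular mapping'' step, and your margin calibration ($\delta_k$ strictly less than half the minimum argmax gap over $S_k$) is a cleaner way to ensure the ball $B(c_k,\epsilon_k)$ stays inside $V_k$ than the paper's appeal to ``orthogonality when $n$ is large.'' The one thing neither proof actually establishes is that $\epsilon_k$ is \emph{small} in any absolute sense --- if the logits in $S_k$ are widely dispersed within $V_k$, your covering radius is large, and the paper's claim that the logits fit in a fixed inscribed sphere after a mapping is left unjustified. Both proofs really only show existence of \emph{some} $(c_k,\epsilon_k)$; the qualitative ``small'' adjective in the lemma statement is informal and is what Theorem~\ref{thm:uniemb} later has to lean on without a quantitative bound. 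Your version makes this gap visible rather than papering over it, which is a modest improvement in rigor.
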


\begin{remark}
    As the last layer of the classification head, the softmax function actually takes the label of the item with the largest weight as the class, which actually corresponds to a cone on $\mathbb{R}^n$. We can take a vector in the cone and use the distance from the model output to this vector to measure whether the optimization is complete.
\end{remark}

%引理2说明算子的范数变化直接代表了权重衰减项的变化
\begin{lemma}
\label{lem:norm}
    For linear operator $l:\mathbb{R}^n \to \mathbb{R}^m$, if it has a matrix expression $M$ under two sets of basis vectors, then the operator norm of l and the Frobenius norm of $M$ has the following relationship:
    \begin{equation}
        \|l\| \leq \|M\|_F \leq \sqrt{\min\{m,n\}}  \|l\|.
    \end{equation}
\end{lemma}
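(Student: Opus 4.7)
The plan is to reduce both norms to simple expressions in the singular values of $M$ and then apply an elementary scalar inequality. Let $\sigma_1 \ge \sigma_2 \ge \cdots \ge \sigma_k \ge 0$ denote the singular values of $M$, where $k = \min\{m,n\}$, and fix an SVD $M = U \Sigma V^{\top}$ with $U$ and $V$ orthogonal.

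The first step is to identify $\|l\|$ with the top singular value $\sigma_1$. Since $U$ and $V$ preserve the Euclidean norm, the change of variable $y = V^{\top} x$ turns the defining supremum $\|l\| = \sup_{\|x\|_2 = 1} \|Mx\|_2$ into $\sup_{\|y\|_2 = 1} \|\Sigma y\|_2$, which is manifestly the largest diagonal entry of $\Sigma$. The second step is to evaluate the Frobenius norm through the trace identity $\|M\|_F^2 = \operatorname{tr}(M^{\top} M)$, then apply the cyclic property of the trace together with orthogonality of $U$ and $V$ to obtain $\|M\|_F^2 = \operatorname{tr}(\Sigma^{\top} \Sigma) = \sum_{i=1}^{k} \sigma_i^2$.

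With both norms expressed through the $\sigma_i$, the target inequalities reduce to the elementary chain
$$\sigma_1^2 \;\le\; \sum_{i=1}^{k} \sigma_i^2 \;\le\; k\,\sigma_1^2,$$
after which taking square roots yields $\|l\| \le \|M\|_F \le \sqrt{\min\{m,n\}}\,\|l\|$. No substantive obstacle arises; this is a textbook consequence of SVD. The only point worth flagging carefully in the write-up is that the statement tacitly assumes orthonormal bases on $\mathbb{R}^n$ and $\mathbb{R}^m$, since the Frobenius norm is only invariant under orthogonal changes of coordinates; under the Euclidean bases inherited from the ambient spaces this is automatic, so the claim is well-posed. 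If the appendix prefers to avoid SVD entirely, an equally clean alternative goes through Cauchy--Schwarz applied row-by-row in $\|Mx\|_2^2 \le \|M\|_F^2 \|x\|_2^2$ for the lower bound, and a column-wise comparison of $\|M\|_F^2$ against $k\,\|l\|^2$ for the upper bound.
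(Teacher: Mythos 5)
Your argument is correct, and it takes a slightly more unified route than the paper's appendix. The paper proves the two inequalities by different means: the lower bound $\|l\|\le\|M\|_F$ via a row-by-row Cauchy--Schwarz estimate $\|Mx\|_2^2\le\|M\|_F^2\|x\|_2^2$, and the upper bound $\|M\|_F\le\sqrt{\min\{m,n\}}\,\|l\|$ via the singular value decomposition. You instead pass to the SVD once, express \emph{both} quantities in terms of the singular values ($\|l\|=\sigma_1$ and $\|M\|_F^2=\sum_i\sigma_i^2$), and obtain the whole statement from the scalar chain $\sigma_1^2\le\sum_i\sigma_i^2\le k\sigma_1^2$. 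What this buys is conceptual economy -- a single decomposition, a single inequality -- at the cost of invoking SVD even for the easy direction, where Cauchy--Schwarz is arguably more elementary; you yourself flag this alternative at the end. Your remark that the claim implicitly requires orthonormal bases (since $\|M\|_F$ is not invariant under general changes of basis while the abstract operator norm is) is a genuine precision that the paper's proof leaves unstated, and it is worth keeping: it is exactly what makes the identity $\|M\|_F^2=\operatorname{tr}(\Sigma^\top\Sigma)$ and the identification $\|l\|=\sigma_1$ legitimate.
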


The Frobenius norm of a matrix is defined as the square root of the sum of the squares of all its entries. By Lemma~\ref{lem:norm}, the variation in the weight decay term can be estimated through changes in the sum of the norms of the involved operators.

\subsection{Properties of the Region with Minimal Weight}
\label{pro}

%定义权重衰减到达的参数区域为\Omega,并简要说明怎么由引理得到定理
We use $W$ to represent the weight decay term in the loss function and write the set of all model weights that make $W < E$ as $\Omega$. We call $\Omega$ the region with minimal weight. To understand the properties of $\Omega$, we first need to write the final output feature formula: $\bar{\phi(\mathsf{cls})}=I_d(\phi(\mathsf{cls})) + l_1 \circ r \circ l_2(\phi(\mathsf{cls})) + (I_d + l_1 \circ r \circ l_2)\circ l_O \circ (n \circ (\sum_{t\in\{i,j,\mathsf{cls}\}}\beta_{QK}(\phi(t),\phi(\mathsf{cls}))l_V(\phi(t)))$. Although this expression appears highly complex, it is the linear portion within the summation that truly matters. Substituting this formula into Lemma~\ref{lem:softmax} yields Theorem~\ref{thm:uniemb}.
% A similar phenomenon was reported in \cite{liu2022towards}.

%定理1:说明模型可以成功输出结果的充要条件
\begin{theorem}
\label{thm:uniemb}
    For $A \subset I$, $B\subset J$, if the model has mastered the $A \times B / \sim$, then there exist vectors $\{k_i\},i=0,1,2,...,p-1$ and relevant positive numbers $\delta_i$ that for all $(i,j) \in A \times B$,
    \begin{equation}
        \| \phi(i) + \phi(j) - k_{r(i,j)} \| < \delta_{r(i,j)}.
    \end{equation}
\end{theorem}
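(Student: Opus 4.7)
The plan is to combine the explicit formula for $\bar{\phi(\mathsf{cls})}$ stated just before Theorem~\ref{thm:uniemb} with Lemma~\ref{lem:softmax} and the small-operator-norm regime guaranteed by Lemma~\ref{lem:norm} inside $\Omega$. The target is to expose that, up to controllable error terms, the only $(i,j)$-dependent part of $\bar{\phi(\mathsf{cls})}$ is an affine function of $\phi(i)+\phi(j)$; inverting the relevant linear composite then produces the claimed vectors $k_{r(i,j)}$.

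First I would split the output as
\[
\bar{\phi(\mathsf{cls})} \;=\; C \;+\; (I_d + l_1\circ r\circ l_2)\circ l_O\circ n\!\left(\sum_{t\in\{i,j,\mathsf{cls}\}}\beta_{QK}(\phi(t),\phi(\mathsf{cls}))\, l_V(\phi(t))\right),
\]
where $C := \phi(\mathsf{cls}) + l_1\circ r\circ l_2(\phi(\mathsf{cls}))$ collects the purely $(i,j)$-independent terms. Via Lemma~\ref{lem:norm}, the bound $W<E$ simultaneously controls the Frobenius (hence operator) norms of $\beta_{QK}, l_V, l_O, l_1, l_2$. Two simplifications then become quantitative: (a) the pre-normalization attention scores are small, so $n$ returns weights uniformly close to $\tfrac{1}{3}$; and (b) $l_1\circ r\circ l_2$ has small norm, using Lipschitzness of the truncation $r$, so that $(I_d + l_1\circ r\circ l_2)$ is a controlled perturbation of the identity.

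Feeding these back into the expression, the $(i,j)$-dependent portion collapses, up to an error controlled by $E$, to $\tfrac{1}{3}\, l_O\circ l_V(\phi(i)+\phi(j))$ plus an $(i,j)$-independent constant. Applying the classifier head $l$ and invoking Lemma~\ref{lem:softmax} yields an estimate
\[
\bigl\|\, L(\phi(i)+\phi(j)) \;-\; (c_{r(i,j)} - C')\,\bigr\| \;<\; \epsilon'_{r(i,j)}, \qquad L := \tfrac{1}{3}\, l\circ l_O\circ l_V,
\]
valid for every $(i,j)\in A\times B$. Fixing a right inverse (or minimum-norm preimage) $L^{\dagger}$ on the image of $L$ and defining $k_{r(i,j)} := L^{\dagger}(c_{r(i,j)} - C')$ delivers the theorem, with $\delta_{r(i,j)}$ obtained by propagating $\epsilon'_{r(i,j)}$ through $\|L^{\dagger}\|$.

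The main obstacle will be making the softmax-uniform and residual-MLP approximations genuinely quantitative: one must bound $\|n(\cdot)-\tfrac{1}{3}\mathbf{1}\|$ in terms of $\|\beta_{QK}\|$ and the embedding norms, and then track how these errors compound after multiplication by the norms of $l_O, l_V$ and by $\|L^{\dagger}\|$. A secondary subtlety is the potential non-injectivity of $L$: since the conclusion only requires $\phi(i)+\phi(j)$ to be close to a vector indexed by $r(i,j)$, I will argue either that the small-weight regime constrains $\phi(i)+\phi(j)$ to a subspace on which $L$ is injective, or simply absorb the $\ker L$ ambiguity into the tolerance $\delta_{r(i,j)}$ by a fixed choice of representative $k_{r(i,j)}$.
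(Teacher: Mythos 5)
Your proposal follows the same high-level outline as the paper's proof — split off the $(i,j)$-independent terms, apply Lemma~\ref{lem:softmax} to the classifier cone, and invert the remaining linear map to land on $\phi(i)+\phi(j)$ — but you handle the nonlinear components quite differently, and arguably more honestly. The paper keeps the truncation operators in play, writing $\bar{\phi}(\mathsf{cls})=r'\circ l'(\phi(i))+r''\circ l''(\phi(j))$ and then invoking an informal assumption $\|l'\|\approx\|l''\|$ (attributed to normalization layers) before asserting that the cone bound ``directly implies'' the stated inequality. That last step silently presumes invertibility and a symmetry ($l'\approx l''$, $r'\approx r''$) that the norm equality alone does not supply. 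You instead justify the symmetry structurally: in the small-weight-decay regime the attention logits are small, so the softmax weights approach $\tfrac13$, which makes the coefficients of $\phi(i)$ and $\phi(j)$ literally equal; and you bound $l_1\circ r\circ l_2$ away as a small perturbation of the identity rather than carrying truncation operators through the inversion. Your treatment of the inverse via a Moore--Penrose/minimum-norm preimage $L^\dagger$, with the kernel ambiguity absorbed into $\delta_{r(i,j)}$, also makes explicit a step the paper leaves implicit. The net tradeoff: the paper's proof is shorter but leans on ``directly implies'' where you supply an actual mechanism; your proof requires you to verify the quantitative bounds on $\|n(\cdot)-\tfrac13\mathbf{1}\|$ and on the MLP perturbation, which you flag as the remaining work but do not carry out — so neither version is fully rigorous, but yours identifies the correct obstacles rather than eliding them.
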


%标注

\begin{remark}
This theorem is given in Liu's paper as $\| \phi(i)+\phi(j)-\phi(i')-\phi(j')\|<\delta \iff r(i,j)=r(i',j')$.
\end{remark}

\begin{remark}
In Theorem~\ref{thm:uniemb}, $k_i$ is not unique, $\delta_i$ is a quantity that is highly related to $k_i$, model architecture, and training set ratio.
\end{remark}

Considering $\| \phi(i) + \phi(j) - k_{r(i,j)} \| < \delta_{r(i,j)}$ and $\| \phi(i+1) + \phi(j) - k_{r(i+1,j)} \| < \delta_{r(i+1,j)}$ for every $i$ in Theorem~\ref{thm:uniemb}, we can obtain Theorem~\ref{thm:med} by applying the triangle inequality to the equations and this theorem will give a direct expression of the progress measure we designed.

%定理2：说明嵌入的均匀性的表现
\begin{theorem}
\label{thm:med}
    Let $\phi(p)=\phi(0)$, when $W < E$, there exists a positive number $\delta$ such that  $\frac{1}{p} \sum_{i=0}^{p-1} \| \phi(i+1)-\phi(i)\|<\delta $. 
\end{theorem}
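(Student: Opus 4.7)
The plan is to leverage Theorem~\ref{thm:uniemb} applied to the full set $I \times J$, together with the additive cyclic structure of $r(i,j) = i+j \pmod p$. The key observation is that for any fixed $j$, the map $i \mapsto r(i,j)$ is a bijection on $\mathbb{Z}_p$, so a one-variable cyclic sum over consecutive embeddings can be constructed directly from the two-variable uniformity bound furnished by Theorem~\ref{thm:uniemb}.

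Concretely, I would first invoke Theorem~\ref{thm:uniemb} with $A = I$ and $B = J$, which is legitimate in the regime $W < E$ where the model has mastered $I \times J / \sim$, to obtain vectors $\{k_m\}_{m=0}^{p-1}$ and tolerances $\{\delta_m\}_{m=0}^{p-1}$ satisfying $\|\phi(i) + \phi(j) - k_{r(i,j)}\| < \delta_{r(i,j)}$ for every $(i,j)$. Fixing an arbitrary $j \in J$ and writing
\begin{equation*}
\phi(i+1) - \phi(i) = \bigl[\phi(i+1) + \phi(j) - k_{r(i+1,j)}\bigr] - \bigl[\phi(i) + \phi(j) - k_{r(i,j)}\bigr] + \bigl[k_{r(i+1,j)} - k_{r(i,j)}\bigr],
\end{equation*}
the triangle inequality yields $\|\phi(i+1) - \phi(i)\| \le \delta_{r(i+1,j)} + \delta_{r(i,j)} + \|k_{r(i+1,j)} - k_{r(i,j)}\|$. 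Summing over $i = 0, \dots, p-1$ with the cyclic convention $\phi(p) = \phi(0)$ (and the induced identification $k_p = k_0$), and exploiting the bijectivity of $i \mapsto r(i,j) = i+j \pmod p$ on $\mathbb{Z}_p$, the right-hand side reorganizes as $2\sum_{m=0}^{p-1} \delta_m + \sum_{m=0}^{p-1} \|k_{m+1} - k_m\|$, which is finite, positive, and independent of the choice of $j$.

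Dividing by $p$ then produces the desired $\delta := \tfrac{2}{p}\sum_m \delta_m + \tfrac{1}{p}\sum_m \|k_{m+1} - k_m\|$. The main conceptual obstacle is justifying the use of Theorem~\ref{thm:uniemb} with $A = I$ rather than with a proper training subset: this amounts to assuming $W < E$ is small enough to coincide with mastery of the full product set, i.e., that grokking has already occurred. A secondary subtlety is that the $k_m$ and $\delta_m$ supplied by Theorem~\ref{thm:uniemb} are not unique, but since only the \emph{existence} of $\delta$ is asserted, any valid choice suffices. The resulting formula makes it transparent that $\delta$ contracts as the $\delta_m$ shrink, which in turn happens as $W$ decreases, thereby justifying the average consecutive distance as a meaningful progress measure in subsequent sections.
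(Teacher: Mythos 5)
Your triangle-inequality argument does produce a finite $\delta$, so it formally establishes what the theorem literally claims, and it matches the informal sketch the paper gives in the main text just before the theorem. However, it does not match the appendix proof, and more importantly the bound you derive is circular in a way that strips the theorem of content. Since $k_m$ in Theorem~\ref{thm:uniemb} is an approximate common value of $\phi(i)+\phi(j)$ over all pairs with $r(i,j)=m$, taking $j=0$ gives $k_m \approx \phi(m)+\phi(0)$ up to $O(\delta_m)$, hence $\|k_{m+1}-k_m\| \approx \|\phi(m+1)-\phi(m)\|$. Substituting this back into your bound yields roughly $\mathsf{MED} \le \mathsf{MED} + \tfrac{2}{p}\sum_m \delta_m$, which is vacuously true and, contrary to what you claim in the last paragraph, does \emph{not} show that the average consecutive distance contracts as $W$ shrinks: the term $\tfrac{1}{p}\sum_m\|k_{m+1}-k_m\|$ is not controlled by the optimization, so the existence of a small $\delta$ is never established.

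The paper's appendix proof instead recasts the post-overfitting update as the constrained problem of minimizing $\sum_{l\in L}\|l\|$ subject to the Theorem~\ref{thm:uniemb} constraints, and proves a separate norm-minimization lemma (Lemma~\ref{lem:bounded-norm-min}): under a single congruence constraint $i+j\equiv 0 \pmod p$, the operator norm of the linear map is minimized precisely when the embeddings are aligned with maximal norm (the minimizing construction even collapses all $x_i$ to one vector $x^\star$). From this the paper argues that weight-decay minimization actively drives $\sum_i\|\phi(i+1)-\phi(i)\|$ downward, which is the substantive claim your proof cannot reach. That route is itself only loosely argued when generalizing from one constraint to the full constraint system, but it at least targets the shrinking of $\mathsf{MED}$ rather than a trivially-true finiteness bound. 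To repair your version you would need an independent, $W$-controlled handle on $\|k_{m+1}-k_m\|$, which is exactly what the paper's optimization lemma is meant to supply.
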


%这个remark具体描述了均匀性：在一个n-1维的流形上均匀采样。
\begin{remark}\label{importantremark}
    What we aim to prove is a stronger result: for any element in $I \times J/ \sim$, embeddings uniformly distributed on the $n-1$ dimensional manifold will reduce the sum of norms in Lemma~\ref{lem:norm}, thereby decreasing the weight decay function. This is what we refer to by $\Omega$ representing the “uniformity” of the embedding. Theorem~\ref{thm:med} directly follows from this result.
\end{remark}

\subsection{Improvement of the Test Set Accuracy}
\label{acc}

%说明这一节的作用，测试集中的样本想要输出正确结果，周围一定曼哈顿距离内一定要有训练集的点。
In this section, we address why the test accuracy increases rapidly with the model parameters in $\Omega$, as described in the following Theorem~\ref{thm:dist}. The idea described in this theorem is simple: if the model wants to accurately output results in the test set, it must "imitate" the results in the training set, and in this task, the imitated object must be nearby.

%上面这个认识的理论表达
\begin{theorem}
\label{thm:dist}
    For $i \in I''$ and $j \in J''$, if model's output is equal to $r(i,j)$, one of the following conditions must be true:
    
    (i) There exists $i_0 \in I$ and $j_0\in J$ with $(i_0,j_0)\sim (i,j)$ such that $d[(i,j),(i_0,j_0)] < D_1$;
    
    (ii) There exists $i_0,i_1,i_2,...,i_s \in I$ and $j_0,j_1,j_2,...j_s \in J$ with $(i_0,j_0) \sim (i,j),(i_t,j_t) \sim (i_{s-t},j_{s-t})$ such that $d[(i,j),(i_1,j_1)]+d[(i_s,j_s),(i_0,j_0)]+\sum_{t=1}^{s-1}d[(i_t,j_t),(i_{t+1},j_{t+1})] < D_2$.

In fact, Condition (ii) includes Condition (i), but in actual use, the probability of condition (ii) occurring is extremely small.
\end{theorem}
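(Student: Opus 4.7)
The plan is to read Lemma~\ref{lem:softmax} and Theorem~\ref{thm:uniemb} in reverse: correct model output on a test pair $(i,j)$ forces a tight statement about the embedding sum $\phi(i)+\phi(j)$; the same statement holds for every mastered training pair in the equivalence class $[r(i,j)]$; the triangle inequality then locates a close training witness. Concretely, by Lemma~\ref{lem:softmax} correctness at $(i,j)$ means $\|\bar{c_{i,j}} - c_{r(i,j)}\| < \epsilon_{r(i,j)}$. Substituting the formula for $\bar{\phi(\mathsf{cls})}$ recorded at the start of Section~\ref{pro} and invoking Remark~\ref{importantremark}, which says that inside the minimal-weight region $\Omega$ the self-attention and MLP nonlinearities collapse to a near-linear action on summed embeddings, I propagate this softmax-level bound back through the operator norms of Lemma~\ref{lem:norm} to obtain $\|\phi(i)+\phi(j)-k_{r(i,j)}\| < \delta_{r(i,j)}$ for the same vectors $\{k_r\}$ that govern training mastery.

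Since the model also masters $I' \times J'/\sim$, Theorem~\ref{thm:uniemb} gives the identical bound $\|\phi(i_0)+\phi(j_0)-k_{r(i,j)}\|<\delta_{r(i,j)}$ for every training pair $(i_0,j_0)$ with $r(i_0,j_0)=r(i,j)$. The triangle inequality then yields $\|\phi(i)+\phi(j)-\phi(i_0)-\phi(j_0)\| < 2\delta_{r(i,j)}$, which translates into the input-space distance bound $d[(i,j),(i_0,j_0)] < D_1 := 2\delta_{r(i,j)}$ whenever such a training witness exists. This is Condition (i), and it is visibly the $s=0$ degenerate case of Condition (ii). When no single training pair in $[r(i,j)]$ lies within $D_1$, I instead thread a chain $(i_0,j_0),(i_1,j_1),\dots,(i_s,j_s)$ whose mirror relation $(i_t,j_t)\sim(i_{s-t},j_{s-t})$ lets the Theorem~\ref{thm:uniemb} inequality be applied at each mirrored pair. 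Iterated triangle inequality telescopes the per-hop embedding differences into a single bound on $\|\phi(i)+\phi(j)-\phi(i_0)-\phi(j_0)\|$, forcing the total path length to be below some $D_2 > D_1$; the scarcity of long equivalence-respecting chains is what makes Condition (ii) behave as the remark in the theorem predicts.

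The principal obstacle is the first step: uniformly linearizing the transformer action inside $\Omega$ so as to transfer the softmax-layer error $\epsilon_{r(i,j)}$ all the way back to an embedding-layer error $\delta_{r(i,j)}$, while keeping track of the residual stream, the normalization of attention scores, and the truncation operator $r$ of the MLP. This is precisely the point where the weight-decay analysis of Section~\ref{pro} does the heavy lifting, and it is what turns a qualitative ``correct output'' hypothesis into a quantitative embedding-space inequality. Once that linearization estimate is in place, the triangle-inequality arguments that yield Conditions (i) and (ii) are routine.
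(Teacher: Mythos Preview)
Your overall strategy matches the paper's: obtain the embedding-sum inequality $\|\phi(i)+\phi(j)-k_{r(i,j)}\|<\delta_{r(i,j)}$ for both the test pair and its training witnesses, then chain via the triangle inequality. The gap is in the sentence ``which translates into the input-space distance bound $d[(i,j),(i_0,j_0)] < D_1 := 2\delta_{r(i,j)}$.'' Here $d$ is the Manhattan distance on $\mathbb{Z}_p\times\mathbb{Z}_p$ (Theorem~\ref{thm:simpledis} and the experiments in Section~\ref{expacc} make this explicit), so setting $D_1$ equal to an embedding-space tolerance $2\delta_{r(i,j)}$ is a type mismatch. More seriously, your triangle inequality shows that \emph{every} training pair $(i_0,j_0)\sim(i,j)$ has embedding sum within $2\delta_{r(i,j)}$ of the test pair's embedding sum --- but this is vacuous, since it follows immediately from all of them clustering near $k_{r(i,j)}$, and it says nothing about any of them being Manhattan-close on the integer grid.

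The ingredient you are missing is Theorem~\ref{thm:med}. Inside $\Omega$ the consecutive embedding increments satisfy $\|\phi(i+1)-\phi(i)\|<\delta$, which telescopes to $\|\phi(i)+\phi(j)-\phi(i')-\phi(j')\|\le (|i-i'|+|j-j'|)\,\delta$. The paper's proof feeds this into the triangle inequality to obtain
\[
\|\phi(i)+\phi(j)-k_{r(i,j)}\|\;\le\; d[(i,j),(i',j')]\,\delta \;+\; \delta'_{r(i,j)}
\]
for a training $(i',j')\sim(i,j)$, so correctness at the test point forces the Manhattan term $d[(i,j),(i',j')]\,\delta$ to be small. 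That is the bridge from the embedding-level inequality to a statement about distances on $\mathbb{Z}_p\times\mathbb{Z}_p$; without invoking Theorem~\ref{thm:med} your ``translation'' step has no justification, and your chain argument for Condition~(ii) inherits the same defect.
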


\begin{remark}
    This theorem may seem complicated, but it actually describes a very simple thing: for this task, the upper limit of the test accuracy that grokking can achieve is determined by the distribution of the training set in the entire data set. This upper limit is not necessarily 1. 
\end{remark}

%说明为什么随机取数据集这个上限总是1：因为当训练集比例超过某个值时，训练集在一定曼哈顿距离内覆盖整个数据集的概率接近1
A necessary condition for the upper limit of the test accuracy that grokking can achieve to be 1 is that the training set must be able to cover the entire test set within a certain Manhattan distance. We will show this result in detail in the experimental part. We can use a simplified structure to discuss this question: given the train data ratio $\alpha$ and an integer $n$ (In practical application n is a quantity linked to Theorem~\ref{thm:med} and Theorem~\ref{thm:dist}.) randomly take a training set $I' \times J'$, let the proportion of points in $I'' \times J''$ whose Manhattan distance to $I' \times J'$ is less than n be $\text{prop}$, when will the expectation of prop reach 1? We can use the upper bound of the joint probability to estimate this value. We give the estimation formula and the specific proof can be found in the appendix.

%上一段认知的理论表述,给出这个临界训练集比例的计算公式
\begin{theorem}
\label{thm:simpledis}
    If $E[\text{prop}]=1$($E[\cdot]$ represents \textit{Mathematical Expectation}),  let C represent the number of points whose Manhattan distance is less than n for a single point, we will have $\alpha \geq \frac{2lnp}{C} $.
\end{theorem}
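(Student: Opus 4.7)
The plan is to turn the condition $E[\text{prop}]=1$ into a quantitative threshold by computing (or upper-bounding) the expected fraction of uncovered test points, and then inverting the resulting inequality to extract a lower bound on $\alpha$. First I would fix a single test point $(i,j) \in I'' \times J''$ and consider its neighborhood $N(i,j)$ of points at Manhattan distance less than $n$, which by hypothesis contains exactly $C$ elements of $I \times J$. Under the natural uniform sampling model (drawing a training set of size $\alpha p^2$ without replacement from $p^2$ total points), the probability that none of the $C$ neighbors lies in $I' \times J'$ is
\begin{equation*}
\Pr[(i,j)\text{ uncovered}] \;=\; \binom{p^2-C}{\alpha p^2}\Big/\binom{p^2}{\alpha p^2} \;=\; (1-\alpha)^{C}\bigl(1+o(1)\bigr).
\end{equation*}

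Next I would let $U$ denote the number of uncovered test points, so that $\text{prop} = 1 - U/|I''\times J''|$ and $E[\text{prop}] = 1 - E[U]/|I''\times J''|$. The ``upper bound of the joint probability'' referenced in the statement is precisely the union bound applied to $U$: by linearity of expectation and the per-point calculation above,
\begin{equation*}
E[U] \;\leq\; p^{2}(1-\alpha)^{C}.
\end{equation*}
Hence the threshold at which the estimate for $E[\text{prop}]$ ceases to drop below $1$ is obtained by equating $p^{2}(1-\alpha)^{C}$ with a constant of order $1$. Taking logarithms gives $2\ln p + C\ln(1-\alpha) \leq 0$, and using the elementary inequality $-\ln(1-\alpha)\geq \alpha$ (or the first-order expansion when $\alpha$ is small) yields
\begin{equation*}
\alpha \;\geq\; 1 - p^{-2/C} \;\geq\; \frac{2\ln p}{C}\bigl(1-o(1)\bigr),
\end{equation*}
which is exactly the claimed bound.

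The main obstacle is interpretive rather than computational. Taken literally, $E[\text{prop}]=1$ with $\text{prop}\in[0,1]$ forces $\text{prop}=1$ almost surely, which in turn would require $\alpha$ extremely close to $1$ rather than merely $\Theta((\ln p)/C)$; so a naive reading of the hypothesis does not match the conclusion. I would therefore follow the paper's phrasing (``estimate this value'' via the upper bound of the joint probability) and read the theorem as identifying the critical value of $\alpha$ at which the union-bound estimate for $E[U]$ crosses below $1$ — equivalently, the threshold below which $E[U]$ cannot tend to $0$ as $p\to\infty$. Under this asymptotic reading, the necessity direction is rigorous: if $\alpha < \frac{2\ln p}{C}(1-o(1))$, then the lower-order estimate $E[U]\gtrsim p^{2}(1-\alpha)^{C} \geq p^{2}\cdot p^{-2/C\cdot C}\cdot(1-o(1))$ diverges with $p$, so $E[\text{prop}]$ is bounded away from $1$, contradicting the hypothesis.
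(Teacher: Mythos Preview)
Your proposal is correct and follows essentially the same route as the paper: compute the probability that a fixed grid point is uncovered, apply the union bound / linearity of expectation to bound the expected number of uncovered points by roughly $p^{2}e^{-\alpha C}$, set this to order $1$, and take logarithms to obtain $\alpha \gtrsim 2\ln p / C$. The only cosmetic difference is that the paper parameterizes the uncovered probability as $(1-C/N)^{m}$ with $m=\alpha N$ (each training point covers a $C/N$ fraction of the grid) whereas you write it as $(1-\alpha)^{C}$ (each of the $C$ neighbors is selected with probability $\alpha$); these agree to leading order, and your explicit discussion of the interpretive gap around the hypothesis $E[\text{prop}]=1$ is in fact more careful than the paper's own treatment.
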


In fact, this lower bound is also the proportion of training sets required to stably observe grokking under the assumption of randomly taking training sets.

\section{A Unified Perspective of Progress Measures}
\label{progress measure}

In this section, we will give a detailed description of \textit{progress measures}. First, we provide a precise definition of the hidden progress measure for deep learning models. The focus is on formalizing the qualitative description of this concept presented in \cite{barakHiddenProgressDeep2022}.

\begin{definition}[Progress Measure]
    Let the complete set of parameters of a deep learning model be denoted by $M$, and the update step by \( \displaystyle n \). Given a function $f:M \times \mathbb{Z}_{+} \to \mathbb{R}$, if there exists a mapping $\varphi \circ f:W \times \mathbb{Z}_+ \rightarrow \{0,1\}$ such that $\varphi \circ f$ takes the value of 1 when a specific phenomenon occurs and 0 otherwise, then we call $f$ a \textbf{progress measure} of this specific phenomenon of the model. 
\end{definition}

Through this strict definition, it becomes clear that a progress measure is an explanation-based monitoring value for detecting which quantities change when the model groks. In other words, it tracks a trait from which the test loss trend can be inferred through the model parameters. Consequently, the design of such a measure is tightly linked to the fundamental reason the model can generalize.

However, an inherent limitation exists in current progress measures. They merely reveal properties of model parameters within the minimal region defined by the weight decay term, without proving that these properties are necessary for the weight decay term to reach a local minimum.

From the discussion in Section~\ref{math_analysis}, the necessary and sufficient condition for the weight decay term to reach a local minimum is that the embedding in the equivalence class of the training set remains uniform. This insight explains why certain progress measures, despite not delving deeply into theory, can still succeed: they indirectly capture some aspect of uniformity. For instance, when Fourier decomposition exposes significant weights, it indicates periodic signals, which is a classical concept in dynamical systems~\cite{brinIntroductionDynamicalSystems2002}.

Meanwhile, interpretable progress measures are often cumbersome to compute. By directly applying our theorem, however, a concise progress measure can be constructed, termed the main embedding diff (MED), which straightforwardly captures changes in test accuracy after training accuracy reaches 1.

\begin{definition}\label{med}
    For the epoch $n$, the main embedding diff (MED) of $n$ is defined by
    \begin{equation}
        \mathsf{MED}(n)=\frac{1}{p} \sum_{i=0}^{p-1} \| \phi^{(n)}(i+1)-\phi^{(n)}(i)\|,
    \end{equation}
    where $\phi^{(n)}$ denotes the embedding mapping $\phi$ after $n$ epochs of updates , $p$ denotes the selected prime number.
\end{definition}

\begin{remark}
    The value of the $\mathsf{MED}$ function and the test loss exhibit nearly identical changes, as we will demonstrate in the experimental section.
\end{remark}

\section{Separability}
\label{region}

In this section, we extend the previous approach by replacing linear separation with convex hull separation of sample points. Our earlier theoretical work focused on the model’s embedding uniformity. Here, we point out that striving for uniformity is akin to seeking a stable decision boundary in the embedding space, and we introduce a convex hull perspective to extend this idea into a more general theory of decision separation.

%定义三类分离性
Among the many progress measures proposed in the past, one stands out for its geometric intuition: it characterizes the movement of sample points under the influence of weight decay. This approach, known as local complexity, is based on linear regions~\cite{humayun2024deep,Hanin2019deep}. Essentially, once the network structure is fixed, the classification head induces a partition of the embedding space, following the network’s inverse mapping. In other words, \ul{\textit{neural network training can be viewed as the evolution of complex decision boundaries in the embedding space, and generalization is reflected by the movement of embeddings corresponding to non-training samples within this space}}. In the case of DNNs, this partition corresponds to the linear regions. Initially, the model creates a partition encompassing all training sample points. At this stage, the weight decay term pushes test samples toward similarly labeled training samples. Local complexity serves as a progress measure that captures this movement process.

From this perspective, the deep learning model effectively learns the separability of a low-dimensional manifold within a high-dimensional Euclidean space. In other words, its learning objective can be viewed as the linear sum of characteristic functions defined over a complex region:
\begin{equation}
    f(x) = \sum _{r \in R} \theta_r \mathbf{1}_{\Omega_r},
\end{equation}
with $R$ represents the set of indicators of the continuous region that needs to be learned, $\Omega_r$ represents the corresponding continuous region with $\mathbf{1}_{\Omega_r}$ represents its characteristic function, and $\theta_r$ represents the characteristic assigned by the model. The classic results of analysis tell us that this type of function is dense in the Lebesgue measurable function set on $\mathbb{R}^n$. Therefore, this understanding is very appropriate.

% 拓扑可分，凸可分，子空间凸可分
The real difficulty lies in how to measure this separation. We can only assume that the model stores knowledge in the form of \ul{\textit{a continuous manifold of the convex hull that can contain multiple training sample points}}. Then, the ability to complete this separation is measured by the convex separability of the embedded point set. The goal of deep learning is to learn the manifold of the embedding space that needs to be separated from the convex hull of discrete samples. We define three types of high-dimensional point set separability to describe this process:

% \begin{definition}
%     $A$ and $B$ are two finite point sets in n-dimensional Euclidean space.

%     1. We call $A$ and $B$ are topologically separable if $A \cap B = \emptyset $;

%     2. We call $A$ and $B$ convexly separable if $\mathsf{Conv}(A) \cap \mathsf{Conv}(B) = \emptyset$.

%     3. We call $A$ and $B$ convexly separable on subspace $\mathcal{U}$ if the projections of $A$ and $B$ on this subspace are convexly separable.
% \end{definition}

\begin{definition}
    Let $A$ and $B$ be two finite point sets in $n$-dimensional Euclidean space. We call $A$ and $B$ 1) \textit{topologically separable} if and only if $A \cap B = \emptyset$; 2) \textit{convexly separable} if and only if $\mathsf{Conv}(A) \cap \mathsf{Conv}(B) = \emptyset$, where $\mathsf{Conv}(A)$ and $\mathsf{Conv}(B)$ denote the convex hulls of $A$ and $B$, respectively; and 3) \textit{convexly separable on subspace} $\mathcal{U}$ if and only if the projections of $A$ and $B$ onto $\mathcal{U}$ are convexly separable.
\end{definition}

Among them, convex separability is stronger than subspace convex separability, and topological separability is stronger than positional separability. The role of positional embedding is to provide topological separability of the same vocabulary. The goal of transformer learning is to make the embedding convexly separable on a certain subspace.

% We speculate that because there are bilinear operators in the transformer, the division of regions does not maintain linearity. We use the convex separability of the subspace instead of linearity. We speculate that as the number of layers of the transformer increases, the regional differentiation growth will have at least quadratic polynomial form. That is, the linear sum of the learned characteristic functions will become the following form:
We hypothesize that the presence of bilinear operators in the transformer causes the division of regions to deviate from linearity. To address this, we adopt the convex separability of the subspace as an alternative to linearity. Furthermore, we conjecture that as the number of transformer layers increases, the growth of regional differentiation will follow at least a quadratic polynomial form. In other words, the linear combination of the learned characteristic functions will take the following form:
\begin{equation}
      f(x) = \sum _{r \in R} \theta_r (\sum _{t \in R_r} \mathbf{1}_{\Omega_t}).
\end{equation}

%多层transformer模型可以更精细地分离高维流形，画图描述这个猜想
We verify this conjecture on the grokking task in Figure~\ref{layer}. As the number of layers increases, due to the rapid decrease in the volume of the indicative area, the test sample points will enter and exit the corresponding indicative area differently, corresponding to the test accuracy rate oscillating sharply in a large range.  This fully demonstrates the subdivision of the region. In our prime field operation task, this means that the continuous regions representing the congruence of the operation results become more numerous and smaller.

% 流形是高维欧氏空间的低维区域，多层的transformer的分离区域的体积会减小（对应定理中的\delta减小） 可以用实验4:grokking现象会随着transformer层数的增加而消失验证。
\begin{figure}[t!]
    % \vskip 0.2in
    \centering
    \centerline{\includegraphics[width=0.90\columnwidth]{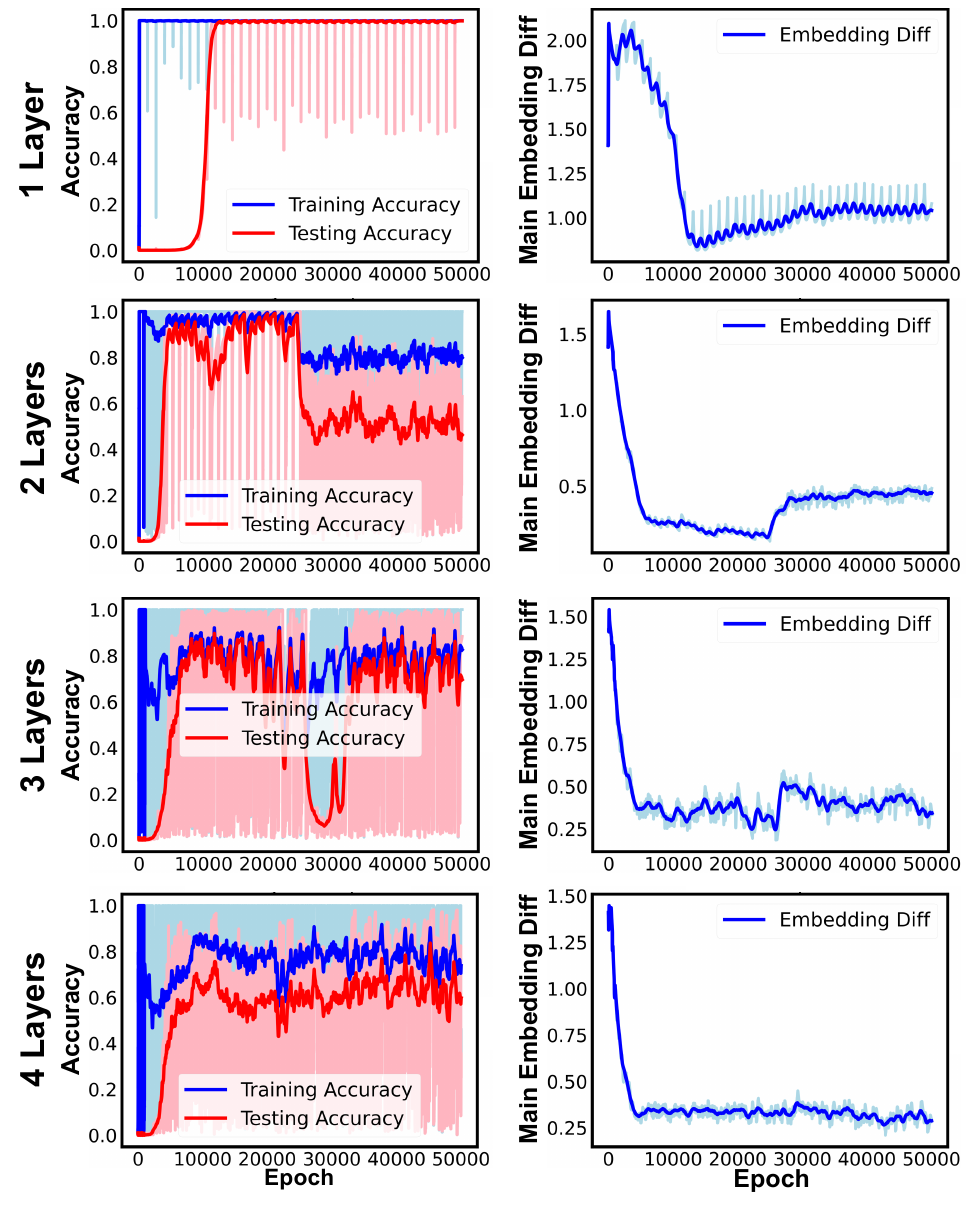}}\vspace{-12pt}
    \caption{\textbf{Impact of Using More Transformer Layers.} As the number of transformer layers increases, the test accuracy fluctuates violently, while the med keeps decreasing.}
    \label{layer}
    % \end{center}
    \vspace{-16pt}
\end{figure}

\section{Experiment}
\label{experiments}
We have rigorously shown that the essence of the grokking phenomenon lies in the uniformity induced by the weight decay term, which enables the training set to generalize to surrounding data points. Moreover, because of the specific nature of the task, only a small amount of data is required to achieve this effect. In Section~\ref{expacc}, we demonstrate a corollary of the insight presented in Section~\ref{acc}: by designing a specialized dataset structure, it is possible to control the upper bound of test accuracy reachable through grokking. Section~\ref{expmed} verifies that our proposed progress measure function changes in tandem with the test loss. Finally, in Section~\ref{expresnetgrokk}, grounded in our theoretical framework, we construct a task that induces grokking in ResNet-18, broadening previous observations that focused on shallower architectures such as 1--2 layer DNNs or transformers.

\subsection{Controlling the Upper Limit of Test Accuracy}
\label{expacc}

We have emphasized in Section~\ref{acc} that the improvement in test accuracy depends not only on the uniform embedding induced by the weight decay term, but also on the distribution of the training set over the entire dataset. When the proportion of the training set exceeds a certain threshold, the probability of the corresponding test accuracy reaching 1 is 100\%. Consequently, by constructing a dataset with a particular distribution, it is possible to modify the upper limit of test accuracy achievable through grokking.

The experimental settings are as follows. Based on the original power experiment, let \(p=97\), set the training set proportion to 0.3, and choose a weight decay coefficient of 1. Under these conditions, when the training set is randomly sampled, the test accuracy begins to rise steadily around epoch 5000 and eventually reaches 1.

Next, the training set is constructed in a targeted manner according to Theorem~\ref{thm:dist}. First, a specific portion of the data is set aside for the test set. Then, from the remaining data, 30\% is randomly selected as the training set, and the remainder is again added to the test set. Conceptually, if the entire dataset is viewed as a two-dimensional matrix, this process can be seen as removing certain strips or square subregions from the matrix. We regard the entire data set as a $p \times p$ grid, where the blue grid represents the corresponding coordinates belonging to the training set, and the gray grid represents the corresponding coordinates belonging to the test set. An example of this dataset selection scheme is illustrated in Figure~\ref{distrubution}.

\begin{figure}[t]
    \centering
    \centerline{\includegraphics[width=\columnwidth]{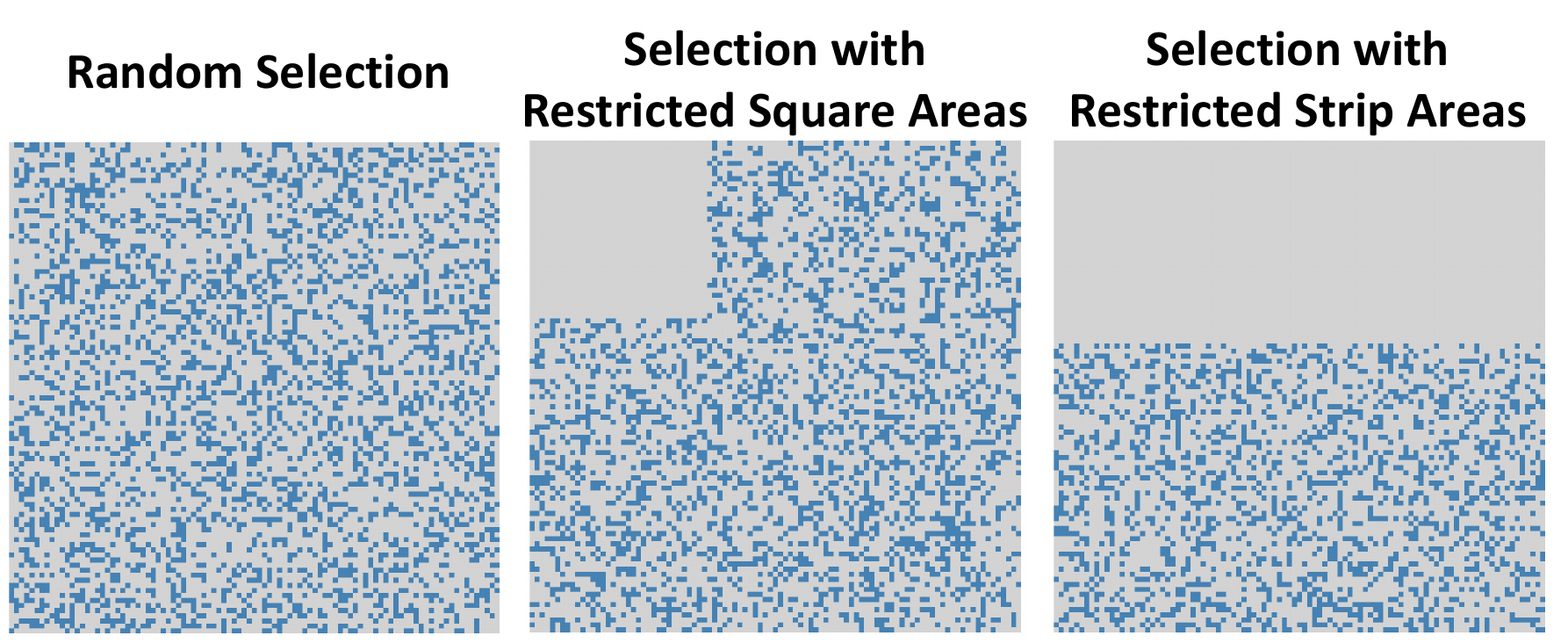}}\vspace{-10pt}
    \caption{\textbf{Designed Training Set Distribution.} The first picture is randomly
    selected, the second picture corresponds to the square area we dug
    out, and the third picture corresponds to the strip area we dug out.}
    \label{distrubution}
    \vspace{-14pt}
\end{figure}

We summarize the experimental results in Table~\ref{table1}. For each specified training set distribution, we record the time (in epochs) at which grokking occurs as GO-Epoch, the upper limit of accuracy achievable through grokking as U-Acc, and the lowest accuracy observed after reaching this upper limit as L-Acc. The two values illustrate the range of accuracy oscillation following grokking.

%列表展示所有可能的结果
\begin{table}[t]
    \caption{Upper Limit of Accuracy Under Specific Training Set Distributions.}
    \label{table1}
    \centering
    \begin{small}
    \begin{sc}
    \begin{tabular}{c|cccc}
    \toprule
     Method & Para & U-Acc & L-Acc & GO-Epoch \\
    \midrule
    \multirow{7}{*}{\shortstack{SQUARE \\ \textit{$i=0,1,...,k$} \\ \textit{$j=0,1,...,k$} }}
                            & $k$=30 & 99.04 & 91.27 & 1500($\pm$100) \\
                            & $k$=35 & 91.07 & 80.73 & 1500($\pm$100) \\
                            & $k$=40 & 84.50 & 81.68 & 1700($\pm$100) \\
                            & $k$=45 & 79.03 & 76.07 & 1900($\pm$100) \\
                            & $k$=50 & 77.39 & 66.02 & 3400($\pm$200) \\
                            & $k$=55 & 62.09 & 57.69 & 2600($\pm$100) \\
                            & $k$=60 & 57.31 & 55.18 & 2200($\pm$100) \\ \midrule
    \multirow{4}{*}{\shortstack{STRIP \\ \textit{$i=0,1,...,t$} \\ \textit{$j=0,1,...,97$} }}
                            & $t$=67 & 7.56& 3.58& disappear \\
                            & $t$=60 & 33.25 & 17.05 & 900($\pm$100) \\
                            & $t$=50 & 80.10& 73.61& 1600($\pm$100) \\
                            & $t$=40 & 89.89 & 84.80 & 1300($\pm$100) \\
    \bottomrule
    \end{tabular}
    \end{sc}
    \end{small}
\vspace{-10pt}
\end{table}

In fact, what we are discussing is still a necessary condition. The upper limit of the grokking test accuracy of this task is determined by the distribution of the training set in each equivalence class in $I \times J / \sim$. It is just that when $f(i,j)=i+j$, the distribution of each of these equivalence classes is completely isomorphic. From an algebraic geometry perspective, \ul{\textit{the distribution of a dataset is determined by the distribution of integer solutions of an algebraic curve taken modulo $p$}}. However, this problem does not admit a general closed-form solution. Of course, we can change the expression of $f(i,j)$ to make the distribution of each of these equivalence classes complex. At this time, even if the value is randomly selected, the upper limit of the grokking test accuracy cannot reach 1, for example, we take $f(i,j)=i^2+ij+j^2$. The specific experimental results are shown in Figure~\ref{x2}. We use $\text{frac}$ to represent the training set proportion. When \( \text{frac} = 0.5, 0.6, 0.7 \), the relationship \( \text{Acc}(1 - \text{frac}) = \text{Constant} \) holds approximately. Therefore, the generalization ability of the model has not improved as a result of the increased training data.
%这里需要绘图
\begin{figure}[ht]
% \vskip 0.2in
    \vspace{-16pt}
    \centering
    \centerline{\includegraphics[width=0.75\columnwidth]{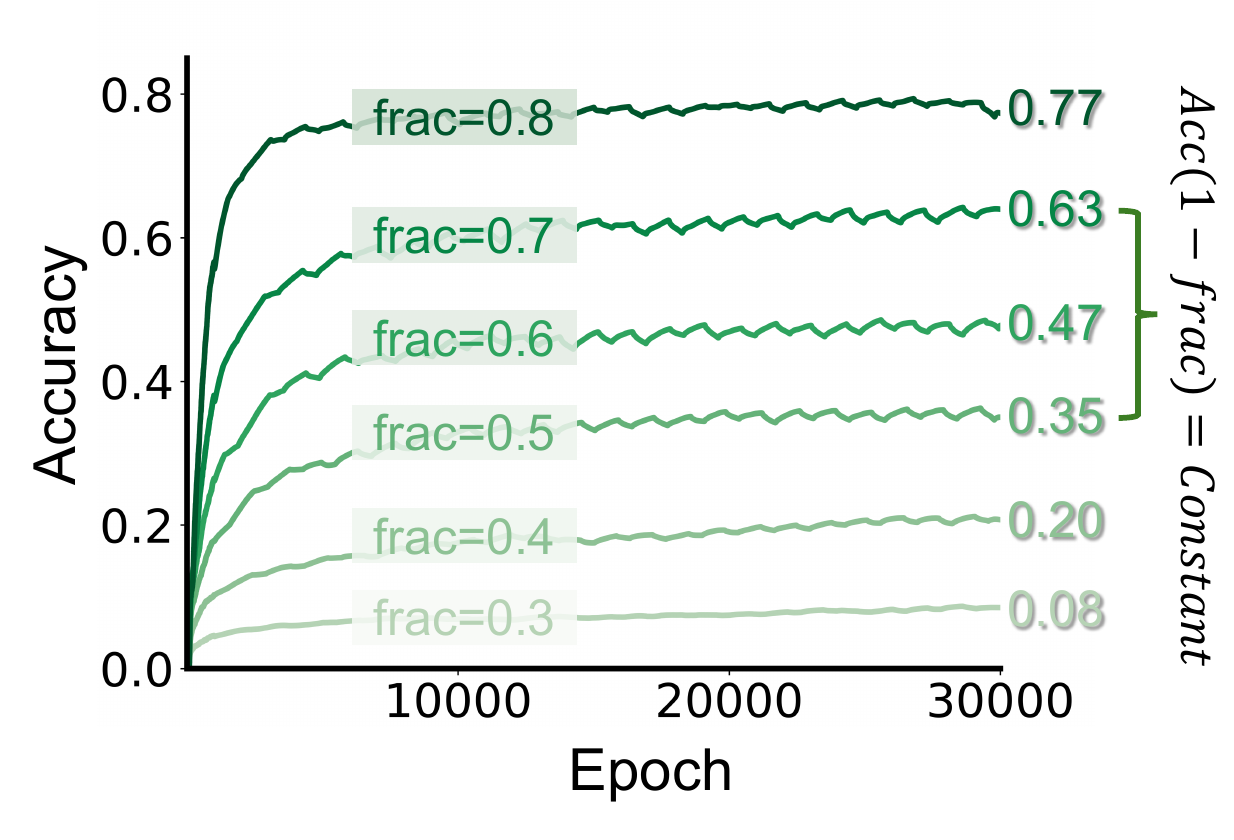}}\vspace{-10pt}
    \caption{\textbf{Accuracy Curves When \( f(i,j) = i^2 + ij + j^2 \).} The upper limit of accuracy increases as \( \text{frac} \) increases. However, the number of samples that can be correctly output in the test set has not increased}
    \vspace{-16pt}
    \label{x2}
% \vskip -0.2in
\end{figure}

We ultimately find that the difference between DNNs and Transformers in this task lies in their specialized operational mechanisms. Both can handle operations conforming to linear structures, such as \(f(i, j) = a i^n + b j^m\). However, in the above experiment, if we replace the 1-layer transformer with a 2-layer DNN~\cite{gromov2023grokkingmodulararithmetic}, the test accuracy will always remain around 0.01. DNNs completely fail on \(f(i, j) = i^2 + i j + j^2\), whereas Transformers retain a degree of generalization. The detailed results and corresponding experiments are provided in Appendix~\ref{support}.
%这里缺引用

\subsection{Synchronous Changes of MED and Test Loss }
\label{expmed}
In Section~\ref{progress measure}, we discussed the progress measure method for monitoring grokking. In fact, the essence of this monitoring is to find the weight decay term as a feature of the update direction of the loss function. In Theorem~\ref{thm:med}, we proved that the uniformity of embedding in the feature direction is the direction in which the weight decay decreases. Therefore, we can construct a progress measure function MED that can be easily calculated as shown in Definition~\ref{med}. The MED function's variation curve closely matches that of the task's test loss.

We maintain the same configuration as in Section~\ref{expacc}, randomly selecting 30\% of the entire dataset for the training set. The only difference is that we set \(p=47, 97, 211, 379\). We then plot the corresponding Loss, MED, and Accuracy curves to illustrate how effectively our progress measure monitors the test loss. The specific experimental results are shown in Figure~\ref{MED}.

%用iclr的图，展示MED的跟踪能力

\begin{figure}[ht]
    \begin{center}
    \centerline{\includegraphics[width=\columnwidth]{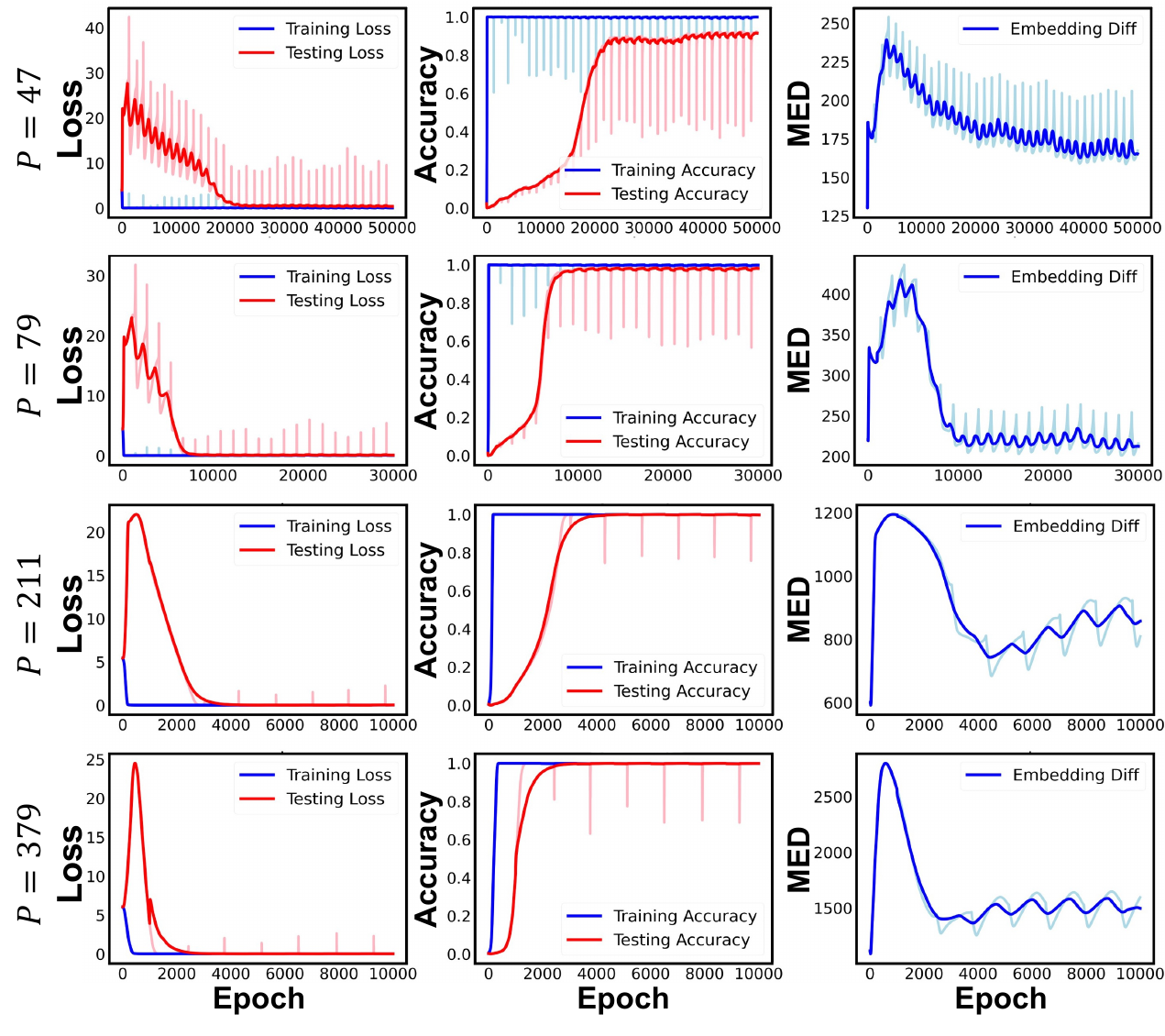}}\vspace{-12pt}
    \caption{\textbf{Performance of MED under different prime numbers.} To amplify the changes in the MED, we omitted the \( \frac{1}{p} \) coefficient defined in Definition~\ref{med} which means we did not take the average here. It can be observed in all four groups of experiments that the MED and test loss have consistent changing trends.}
    \label{MED}
    \end{center}
    \vspace{-32pt}
\end{figure}

\subsection{A Task Grokking on Resnet-18}
\label{expresnetgrokk}
In this experiment, the following issue is addressed: grokking is not exclusive to prime-field operations, nor is the transformer the only architecture that exhibits grokking. Indeed, our proof shows that the transformer itself is not strictly necessary. The critical insight is that neither the architecture nor the task solely determines the occurrence of grokking; rather, it is governed by specific properties of both the architecture and the task.

According to our theorem, the necessary condition for grokking is that the task’s input exhibits a suitable representational structure and that the network has a corresponding property structure. In the prime-field operation task, this representational structure is a linear periodic structure. As is well-known, convolutional networks tend to destroy linearity due to pooling layers, which explains the lack of a prominent grokking phenomenon in these networks. Guided by our theoretical understanding and the features of convolution, we construct a dataset that ultimately induces grokking in a ResNet-18.

To create a dataset that induces grokking in ResNet-18, the shallow convolutional layers are configured to extract local features, the deep convolutional layers perform feature operations, and the deeper layers maintain linear features. The task is defined as follows: first, \textbf{a dictionary for image classification} is constructed by recording \(n\) images and their corresponding categories, with categories represented by integers from 1 to \(n\). Each image is then divided into four parts, with each part embedded into an image from the dictionary such that the category of the resulting image is the sum of the categories of the four regions. This process generates a dataset of size \(n^4\). Detailed instructions are provided in Figure~\ref{resnetdata}.

\begin{figure}[ht]
    \vspace{-4pt}
    \centering
    \centerline{\includegraphics[width=\columnwidth]{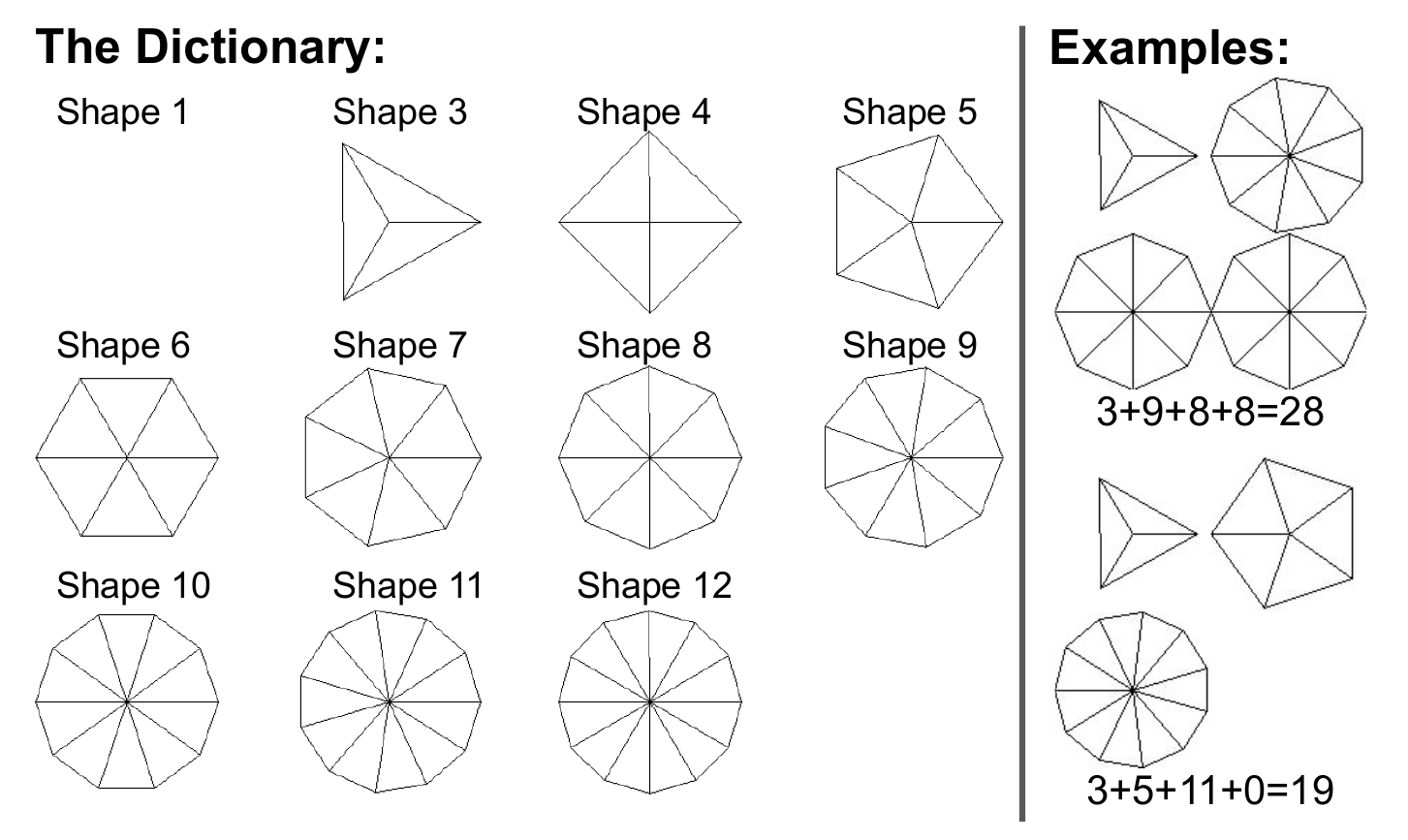}}\vspace{-12pt}
    \caption{\textbf{Structure of the Data Set.} The left side of the figure illustrates our dictionary, while the right side provides an example of a generated sample.}
    \label{resnetdata}
\end{figure}

\begin{figure}[h]
    % \vskip 0.2in
    \vspace{-12pt}
    \centering
    \centerline{\includegraphics[width=0.9\columnwidth]{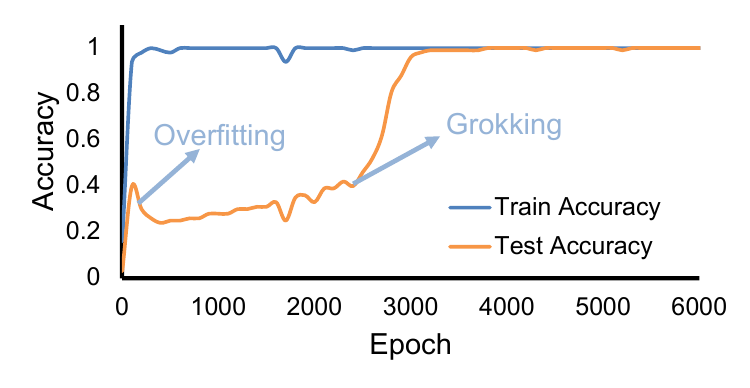}}\vspace{-18pt}
    \caption{\textbf{The Grokking Phenomenon on Resnet-18.} The training accuracy reaches 1 after approximately 100 epochs, while the test accuracy fluctuates between 0.3 and 0.4 for about 2800 epochs before rapidly rising to 1. }
    \label{resnet-fig}
    \vspace{-14pt}
    % \vskip -0.2in
\end{figure}

We use the ResNet-18 model implemented in PyTorch, the Adam optimizer, a learning rate of \(10^{-3}\), and a weight decay coefficient of \(10^{-4}\). The dataset is configured as described in Figure~\ref{resnetdata}. The dictionary contains 11 entries in total, allowing for the generation of 14,641 samples with 25\% of the data randomly selected as the training set. The experimental results are presented in Figure~\ref{resnet-fig}.

%实验1:列表，展示定理2.10的正确性
%实验1包括：1.为什么随机取训练集准确率能接近1的分析； 2.特定取训练集可以让准确率浮动； 3.如果f(i,j)=i^2+ij+j^2,这个准确率为什么不能到达1

%实验2:展示MED函数的强大之处
%实验2，在任何设置之下，MED的函数都可以反应测试损失的变化。

%实验3:transformer架构并不本质决定grokking现象，我们可以基于grokking的本质制造一个图像上的grokking现象（resnet-18）
%因为推导过程中transformer没有不可替代性，因而我们可以在别的网络结构上同样制造出grokking现象。

\section{Conclusion}
In this paper, we systematically studied the grokking phenomenon of computational tasks on prime domains and analyzed the reasons behind the increase in test accuracy. We decomposed this increase into two parts: the uniformity brought about by weight decay, and the further accuracy improvement resulting from the interaction of that uniformity with the training set distribution. We proved these points rigorously and verified them through detailed experiments. We also noted that this uniformity should be interpreted as representation uniformity rather than a concept restricted to the embedding space, implying that both model architecture and task jointly determine how uniformity manifests. Based on this, we proposed an effective progress measure for prime-domain tasks and designed a ResNet-18 grokking task, extending beyond prior work on shallow networks and Transformers. Our findings suggest that for cognitively bounded domains, structured data can be more beneficial than larger data volumes.

%e proposed a concise and effective progress measure for computational tasks on prime domains, and we constructed a grokking task on ResNet-18, extending beyond previous work focused on shallow DNNs or Transformers. These insights suggest that human cognition is inherently bounded, and for such bounded sets, using more structured data may be more effective than simply increasing data volume.

\section*{Impact Statement}

This work aims to improve the understanding of the special generalization behavior of deep networks. This paper points out that training sets with good structure distribution can achieve better generalization effect under the effect of weight decay. Therefore, it may affect the subsequent performance-oriented models to increase the number of training rounds, thus resulting in a waste of computing resources.

% In the unusual situation where you want a paper to appear in the
% references without citing it in the main text, use \nocite
\nocite{brinIntroductionDynamicalSystems2002}

\bibliography{example_paper}
\bibliographystyle{icml2025}

%%%%%%%%%%%%%%%%%%%%%%%%%%%%%%%%%%%%%%%%%%%%%%%%%%%%%%%%%%%%%%%%%%%%%%%%%%%%%%%
%%%%%%%%%%%%%%%%%%%%%%%%%%%%%%%%%%%%%%%%%%%%%%%%%%%%%%%%%%%%%%%%%%%%%%%%%%%%%%%
% APPENDIX

\newpage
\appendix
\onecolumn

\section{Notations}

The notations used throughout this article are summarized in Table \ref{table:notations}.

\begin{table}[h!]
\centering
\caption{Some important notations used in this paper.}
\label{table:notations}
\begin{tabular}{@{}ll@{}}
\toprule
\textbf{Notation} & \textbf{Description} \\ \midrule
$mod$ & The remainder when an integer is divided by p \\
$\mathbb{Z}_p$ & The field of prime number p $\{0,1,2...,p\}$\\
$\mathbb{Z}_{+} $ & The set of positive integers\\
$f(i,j)$ & A two-variable polynomial function\\
$r(i,j)$ & The remainder of $f(i,j)$ modulo $p$ by $r(i,j)$\\
$I_d$ &  Identity operator, $I_d(x)=x$\\
$l,l_O,l_V,l_1,l_2$ &  Linear operator\\
$n$ & Normalization operator \\
$r$ & Truncation operator, set the components in a specific subspace to 0\\
$\beta_{QK}$ & Bilinear operator \\
$X$ & Input token collection\\
$\phi$ & The embedding function which combines word embedding and position embedding\\
$\phi^{(n)}$ & Embedding function after $n$ epochs' updating\\
$\times$ & Cartesian product\\
$I \times J$ &  The Cartesian product of $I$ and $J$, representing the entire data set\\
$ \sim $ & A equivalence relation in sets\\
$S/\sim$ & The quotient of the set $S$ with respect to the equivalence relation $\sim$, whose elements are subsets of $S$\\
$I' \times J'$ &  The Cartesian product of $I'$ and $J'$, representing the training data set\\
$I'' \times J''$ &  The Cartesian product of $I''$ and $J''$, representing the test data set\\
$c_i$ &  The vector representing the optimization direction\\
$\epsilon_i$ &  The acceptable error distance of the model\\
$\| \cdot \|$ & Operator norm \\
$\| \cdot \|_F$ & Frobeniums norm \\
$W$ & The weight decay term\\
$E$ & The critical value of weight decay\\
$\Omega$ & Formal definition, set of all parameters when $W<E$\\
$P(\cdot)$ & Probability Function \\
$\alpha,\text{frac} $ & Training set ratio\\
$\text{prop}$ & The proportion of the set that the training set can cover within a certain Manhattan distance in the overall data set\\
$E[\cdot]$ & Mathematical expectation\\
$R,R_r$ & the set of labels\\
$\theta_r$ & The value corresponding to the label r\\
$\Omega_r$ & The set corresponding to the label r\\
$\mathbf{1}_{\Omega_r}$ & The characteristic function of set $\Omega_r$\\
$Conv(A)$ & The convex hull of set A\\
\bottomrule
\end{tabular}
\end{table}

\section{Proof of Theorem in Section~\ref{math_analysis}}
\subsection{Proof of Lemma~\ref{lem:softmax}}
\begin{proof}
If the model produces the correct output, then
\[
\mathsf{Softmax}(\bar{c}_{i,j}) - r(i,j) = 0.
\]
Let
\[
V_i = \{\,x \in \mathbb{R}^n \mid x_i > x_j \;\text{for all}\; j \neq i\}.
\]
It follows that \(\bar{c}_{i,j} \in V_{r(i,j)}\). When \(n\) is sufficiently large, the orthogonality of \(\bar{c}_{i,j}\) becomes pronounced, making it possible to find a suitable vector within \(V_i\). Concretely, denote by \(B_r\) the inscribed sphere of \(V_i\) with radius \(r\). By applying an appropriate regular mapping, all the \(\bar{c}_{i,j}\) can be placed within this sphere \(B_r\). In that case, the center and radius of \(B_r\) are precisely the vectors \(c_i\) and \(\epsilon_i\) we seek. Note that they are not unique, as there may be other valid choices for the center-radius pair.

Therefore, this lemma essentially states that tokens belonging to the same class are ultimately drawn into the same cone.
\end{proof}

\subsection{Proof of Lemma~\ref{lem:norm}}
\begin{proof}

The operator norm of $l$ is defined as:
\[
\|l\| = \sup_{\|x\|_2 \neq 0} \frac{\|l(x)\|_2}{\|x\|_2} = \sup_{\|x\|_2 = 1} \|Mx\|_2,
\]
where $M$ is the matrix representation of $l$.

The Frobenius norm of $M$ is:
\[
\|M\|_F = \sqrt{\sum_{i=1}^m \sum_{j=1}^n |M_{ij}|^2}.
\]

For any vector $x \in \mathbb{R}^n$, we have:
\[
\|Mx\|_2^2 = \sum_{i=1}^m \left( \sum_{j=1}^n M_{ij} x_j \right)^2.
\]

By the Cauchy-Schwarz inequality:
\[
\|Mx\|_2^2 \leq \sum_{i=1}^m \left( \sum_{j=1}^n M_{ij}^2 \right) \|x\|_2^2 = \|M\|_F^2 \|x\|_2^2.
\]

Taking the square root:
\[
\|Mx\|_2 \leq \|M\|_F \|x\|_2.
\]

For $\|x\|_2 = 1$, taking the supremum over $x$:
\[
\|l\| = \sup_{\|x\|_2 = 1} \|Mx\|_2 \leq \|M\|_F.
\]

The Frobenius norm of $M$ can be expressed in terms of its singular values. Let the singular value decomposition of $M$ be:
\[
M = U \Sigma V^\top,
\]
where $U \in \mathbb{R}^{m \times m}$ and $V \in \mathbb{R}^{n \times n}$ are orthogonal matrices, and $\Sigma$ is a diagonal matrix with singular values $\{\sigma_i\}$.

The Frobenius norm is:
\[
\|M\|_F^2 = \sum_{i=1}^{\min\{m, n\}} \sigma_i^2.
\]

The operator norm is the largest singular value:
\[
\|l\| = \|M\| = \sigma_{\max}.
\]

Thus:
\[
\|M\|_F^2 = \sum_{i=1}^{\min\{m, n\}} \sigma_i^2 \leq \min\{m, n\} \cdot \sigma_{\max}^2 = \min\{m, n\} \cdot \|l\|^2.
\]

Taking the square root:
\[
\|M\|_F \leq \sqrt{\min\{m, n\}} \|l\|.
\]

Combining the results:
\[
\|l\| \leq \|M\|_F \leq \sqrt{\min\{m, n\}} \|l\|.
\]
\end{proof}
\subsection{Proof of Theorem~\ref{thm:uniemb}}
\begin{proof}

Recall definitions in Section~\ref{math_analysis} that
\[
\bar{\phi}(\mathsf{cls})
= I_d(\phi(\mathsf{cls})) 
  + l_1 \circ r \circ l_2(\phi(\mathsf{cls})) 
  + \bigl(I_d + l_1 \circ r \circ l_2\bigr)\circ l_O \circ 
    \Bigl(
       n \circ \Bigl(\sum_{t\in \{i,j,\mathsf{cls}\}}\beta_{QK}\bigl(\phi(t),\phi(\mathsf{cls})\bigr)\,l_V\bigl(\phi(t)\bigr)\Bigr)
    \Bigr).
\]
Removing the constant (invariant) terms from this expression yields:
\[
\bar{\phi}(0)
= \bigl(I_d + l_1 \circ r \circ l_2\bigr)\circ l_O \circ 
    \Bigl(
       n \circ \bigl(\beta_{QK}\bigl(\phi(i),\phi(\mathsf{cls})\bigr)\,l_V\bigl(\phi(i)\bigr)
                   + \beta_{QK}\bigl(\phi(j),\phi(\mathsf{cls})\bigr)\,l_V\bigl(\phi(j)\bigr)
              \bigr)
    \Bigr).
\]

Since addition and subtraction of linear transformations remain linear, we write:
\[
\bar{\phi}(\mathsf{cls})
= r' \circ l'\bigl(\phi(i)\bigr) \;+\; r'' \circ l''\bigl(\phi(j)\bigr),
\]
where \(r'\) and \(r''\) are truncation operators, and \(l'\) and \(l''\) are linear operators.

Owing to the widespread usage of normalization layers, it is reasonable to assume
\[
\|l'\| \;\approx\; \|l''\|.
\]

From Lemma~\ref{lem:softmax}, it follows that
\[
\bigl\|\,r' \circ l'\bigl(\phi(i)\bigr) \;+\; r'' \circ l''\bigl(\phi(j)\bigr)
      \;-\; c'_{r(i,j)}\bigr\|
< \epsilon'_{r(i,j)}.
\]
This directly implies
\[
\|\;\phi(i) \;+\; \phi(j) \;-\; k_{r(i,j)}\|\;<\;\delta_{r(i,j)},
\]
for appropriate choices of \(k_{r(i,j)}\) and \(\delta_{r(i,j)}\).

As the size of \(A \times B\) grows, the set of admissible vectors \(\{k_i\}\) converges toward a single point, and the corresponding \(\delta_i\) values shrink accordingly. Hence, while \(k_i\) and \(\delta_i\) are not strictly unique for finite \(|A \times B|\), they become more tightly constrained as \(|A \times B|\) increases.

\emph{In essence, this argument establishes that if the model converges correctly, the linear operations (subject to truncation and normalization) enforce an approximate uniform embedding in the deeper layers.}
\end{proof}
\subsection{Proof of Theorem~\ref{thm:med}}
\begin{proof}
    We denote the set of linear operators involved in the model as $L$. According to Lemma~\ref{lem:norm} and Theorem~\ref{thm:uniemb}, we can regard the model update after the training accuracy reaches 1 as the following optimization problem:
    \[
    min \sum_{l \in L} \|l\|
    \]
    with
    \[
    \bigl\|\,r' \circ l'\bigl(\phi(i)\bigr) \;+\; r'' \circ l''\bigl(\phi(j)\bigr)
      \;-\; c'_{r(i,j)}\bigr\|
< \epsilon'_{r(i,j)}
    \]
    holds for any element in $I \times J/\sim$.

Even with operator simplification, this problem is still extremely complicated due to the constraints. To address this question, we first consider the optimization problem under the constraints provided only for one element in $I \times J /sim$.

\begin{lemma}\label{lem:bounded-norm-min}
Let $V,W$ be real Euclidean spaces, let $l: V \to W$ be a linear map, and fix a prime $p \ge 2$. Suppose we have vectors $x_0, x_1, \ldots, x_{p-1} \in V$ satisfying
\[
   m \;\le\; \|x_i\| \;\le\; M 
   \quad \text{for all }0 \le i \le p-1,
\]
where $0 < m \le M < \infty$. Let $r \in W$ be a nonzero vector. Assume the constraint
\[
   i + j \equiv 0 \;(\bmod\; p)
   \quad \Longrightarrow \quad
   l(x_i) + l(x_j) \;=\; r.
\]
Then there exists a linear map $l$ satisfying the above constraint for which the operator norm $\|l\|$ (in the Euclidean/2-norm sense) attains its minimum possible value, and
\[
   \min \bigl\|l\bigr\|
   \;=\;
   \frac{\|r\|}{2\,M}.
\]
\end{lemma}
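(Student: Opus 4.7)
The plan is to prove $\|r\|/(2M)$ as the minimum operator norm by establishing matching lower and upper bounds. First, I would exploit the single constraint that already pins down $l(x_0)$: since $0 + 0 \equiv 0 \pmod{p}$, the hypothesis applied with $i = j = 0$ forces $2 l(x_0) = r$, hence $l(x_0) = r/2$. By the definition of the operator norm this gives
\[
\|l\| \;\ge\; \frac{\|l(x_0)\|}{\|x_0\|} \;=\; \frac{\|r\|/2}{\|x_0\|} \;\ge\; \frac{\|r\|}{2M},
\]
where the last step uses $\|x_0\| \le M$. Crucially, this lower bound is uniform over every admissible $l$, so $\min \|l\| \ge \|r\|/(2M)$.

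For the matching upper bound I would construct an explicit $l$ as a rank-one operator. Fixing a unit vector $u \in V$, set
\[
l(v) \;=\; \langle v,\, u\rangle \cdot \frac{r}{2M}.
\]
Any such $l$ has operator norm exactly $\|r\|/(2M)$ by the standard rank-one formula. The choice $u = x_0/\|x_0\|$ together with $\|x_0\| = M$ then makes $l(x_0) = r/2$ automatically. To guarantee the remaining constraints $l(x_i) + l(x_{p-i}) = r$ (obtained from the pairs $(i, p-i)$ with $1 \le i \le p-1$), I would verify $\langle x_i + x_{p-i},\, u\rangle = 2M$, which by Cauchy--Schwarz together with the norm bound $\|x_i\| \le M$ forces the symmetry $x_i + x_{p-i} = 2 x_0$ and $\|x_i\| = \|x_{p-i}\| = M$.

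The main obstacle of this plan is bridging the gap between the abstract lower bound and the attainability of the rank-one construction: the latter works only in the saturated regime where the $x_i$ lie on the outer sphere of radius $M$ in a symmetric configuration around $x_0$. I would close this gap by appealing to the ambient setting in which the lemma is invoked, where the minimization of the weight decay term drives the embeddings precisely into this configuration; equivalently, one can reduce the general minimization to computing the distance from $x_0$ to the subspace spanned by the differences $(x_k + x_{p-k})/2 - x_0$, and observe that this distance is bounded above by $M$ with equality exactly when the symmetry condition holds. Either way, the answer $\|r\|/(2M)$ emerges as both the lower bound dictated by $\|x_0\| \le M$ and the upper bound realized by the rank-one operator, and it is this matching value that the lemma records.
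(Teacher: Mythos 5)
Your argument is essentially the paper's: a lower bound $\|l\| \ge \|r\|/(2M)$ forced by the constraint, matched by a rank-one construction. Your lower bound is in fact marginally cleaner---you use $l(x_0)=r/2$ together with $\|x_0\|\le M$, whereas the paper uses $l(x_i+x_{p-i})=r$ together with $\|x_i+x_{p-i}\|\le 2M$; both yield the same number.

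You have also correctly diagnosed a subtlety the paper leaves implicit. For a fixed admissible family $\{x_i\}$ the minimum operator norm is generally strictly larger than $\|r\|/(2M)$ (and the constraints may even be inconsistent, e.g.\ if two of the sums $x_i+x_{p-i}$ are parallel but of different lengths); the stated value is attained only in the degenerate configuration where all the $x_i$ coincide at a point of norm $M$, which by equality in Cauchy--Schwarz and the triangle inequality is exactly what your rank-one ansatz forces. The paper's proof handles this the same way but without flagging it, simply declaring ``pick a single nonzero vector $x^\star$ with $\|x^\star\|=M$ and set $x_i=x^\star$ for all $i$''---so the lemma is really a joint minimization over both $l$ and $\{x_i\}$, as you inferred. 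One minor imprecision in your side remark: the quantity controlling the general case is the distance from the origin to the affine hull of the sums $x_i+x_{p-i}$, not quite ``the distance from $x_0$ to the subspace spanned by the differences $(x_k+x_{p-k})/2-x_0$''; but since you only use this to motivate the degenerate configuration, it does not affect the argument.
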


\begin{proof}
    First, note that for each pair $(i,j)$ with $i+j \equiv 0 \pmod{p}$, we must have
\[
   l(x_i) + l(x_j)
   \;=\;
   r,
\]
in particular, taking $i,j=0$ yields $2\,l(x_0) = r$, so $l(x_0) = r/2$. For $i \neq 0$, we require $l\bigl(x_i + x_{p-i}\bigr) = r$. Hence each sum $x_i + x_{p-i}$ must be mapped to $r$.  

To minimize $\|l\|$, we seek to maximize the norm of any input $v \in V$ for which $l(v) = r$. Here, $v$ can be chosen as $x_i + x_{p-i}$. By the bound $\|x_i\|\le M$, we know
\[
   \bigl\| x_i + x_{p-i} \bigr\|
   \;\le\;
   \|x_i\| + \|x_{p-i}\|
   \;\le\;
   M + M
   \;=\;
   2\,M.
\]
Thus the largest possible norm of $x_i + x_{p-i}$ is $2M$ if we choose $\|x_i\|=\|x_{p-i}\|=M$ and align them in the same direction.  

A direct construction is: pick a single nonzero vector $x^\star \in V$ with $\|x^\star\| = M$ and set $x_i = x^\star$ for all $i$. Then $x_i + x_{p-i} = 2\,x^\star$, which has norm $2M$. Define a rank-1 operator $l$ by
\[
   l(w)
   \;=\;
   \frac{\langle w,\,2x^\star\rangle}{\|2x^\star\|^2}\,r
   \;=\;
   \frac{\langle w,\,2x^\star\rangle}{4\,M^2}\,r.
\]
Since $\|2x^\star\| = 2M$, we have $l(2x^\star) = r$ and therefore $l(x_i)+l(x_{p-i})=l(2x^\star)=r$ for all pairs $(i,p-i)$.  

Finally, we verify that this construction indeed minimizes $\|l\|$. For any linear operator $l$, if $l(v)=r$, then
\[
   \|l\|
   \;=\;
   \sup_{w \neq 0}\frac{\|l(w)\|}{\|w\|}
   \;\ge\;
   \frac{\|l(v)\|}{\|v\|}
   \;=\;
   \frac{\|r\|}{\|v\|}.
\]
Since $\|v\|\le 2M$ for our constraint, the infimum of $\|l\|$ cannot be smaller than $\frac{\|r\|}{2M}$. The above rank-1 construction achieves exactly that norm:
\[
   \bigl\|l\bigr\|
   \;=\;
   \frac{\|r\|}{\|2x^\star\|^2}
   \,\sup_{w\neq 0}\!
   \frac{|\langle w,2x^\star\rangle|}{\|w\|}
   \;=\;
   \frac{\|r\|}{4\,M^2} \,\|2x^\star\|
   \;=\;
   \frac{\|r\|}{2\,M}.
\]
Hence $\frac{\|r\|}{2M}$ is exactly the minimal possible value for $\|l\|$.  
\end{proof}

This lemma tells us that for a single constraint, the minimum value of weight decay corresponds to an embedding that is specific distributed on a smooth manifold of $n-1$ dimensions.

This implies $\sum_{i=0}^{p-1} \| \phi(i+1)-\phi(i) \|$ will decrease within the optimization process. In this way we can get the desired results.

Finally, we consider the general form of $f(i,j)$, which means that we need to change the constraints in the lemma to for each pair $(i,j)$ with $f(i,j) \equiv 0 \pmod{p}$, we must have
\[
   l(x_i) + l(x_j)
   \;=\;
   r,
\]

Here, the general f(i,j) cannot guarantee that the manifold has a good structure, so our theorem always holds, but the grokking phenomenon may not occur.
\end{proof}

\subsection{Proof of Theorem~\ref{thm:dist}}
\begin{proof}
    According to Theorem~\ref{thm:uniemb}, for $i \in I''$,$j \in J''$, if the model outputs the correct r(i,j), we should have:
    \[
    \|\phi(i)+\phi(j)-k_{r(i,j)}\| < \delta_{r(i,j)}.
    \]
    Using the triangle inequality we could get:
    \[
    \|\phi(i)+\phi(j)-k_{r(i,j)}\| \leq
    \|\phi(i)+\phi(j)-\phi(i')-\phi(j')\|+ 
    \|\phi(i')+\phi(j')-k_{r(i,j)}\|
    \]
    If $(i,j)\sim (i',j')$ we use Theorem~\ref{thm:med}
    \[
    \|\phi(i)+\phi(j)-k_{r(i,j)}\| \leq |i-i'+j-j'|\delta +\delta'_{r(i,j)}
    \]

    This proves case (i), the proof of case (ii) is similar.
\end{proof}

\subsection{Proof of Theorem~\ref{thm:simpledis}}
\begin{proof}

Consider a two-dimensional grid of size \( p \times p \), yielding \( N = p^2 \) distinct points (or pairs). Suppose we randomly select a subset of these points as the \textit{training set}, whose size is \( m = \alpha \times N \) for some ratio \( 0 \leq \alpha \leq 1 \). We say that a grid point \((i, j)\) is \emph{covered} by a training point \((i', j')\) if the Manhattan distance
\[
d\bigl((i, j), (i', j')\bigr) = |i - i'| + |j - j'| 
\]
is strictly less than some threshold \( k \). We define the \textit{coverage set} for a single training point \((i', j')\) as
\[
\mathcal{C}(i', j') = \Bigl\{(i, j) : \; |i - i'| + |j - j'| < k \Bigr\}.
\]
Each training point thus covers exactly \( C \) points in the grid, where 
\[
C = 1 + \sum_{d=1}^{k-1} 4d.
\]
We wish to determine how large \(\alpha\) must be for the probability of \emph{full coverage}---i.e., every point in the grid is covered by at least one training point---to be close to 1.

Let \(\alpha\) be the fraction of points selected in the training set, and let \( m = \alpha \times N \) be the size of this training set. For any fixed point \((i, j)\), the probability that this point is covered by at least one training point is
\[
P_{\mathrm{cover}} \approx 1 - \left( 1 - \frac{C}{N} \right)^m.
\]

Each training point independently covers a fraction \(\frac{C}{N}\) of all points, assuming a uniform sampling from the \(N\) grid points. Hence, the probability that a single training point \emph{does not} cover \((i, j)\) is \(1 - \frac{C}{N}\). If we select \(m\) training points independently and uniformly at random, then the probability that none of them cover \((i, j)\) is approximately 
\[
\left(1 - \frac{C}{N}\right)^m.
\]
Consequently, the probability that \((i, j)\) is covered by at least one training point becomes
\[
1 - \left(1 - \frac{C}{N}\right)^m.
\]

We further leverage the union bound to argue about the probability of \emph{full coverage} of all \( N \) points:

The probability of full coverage satisfies
\[
P_{\mathrm{full}} \;=\; \Pr\bigl(\text{all } N \text{ points are covered}\bigr)
\;\ge\;
1 \;-\; N \;\biggl(1 - \tfrac{C}{N}\biggr)^m.
\]

By the union bound,
\[
\Pr\bigl(\cup_{(i,j)} [\text{point }(i,j)\text{ not covered}]\bigr)
\;\le\;
\sum_{(i,j)} \Pr[\text{point }(i,j)\text{ not covered}]
\;=\;
N \,\left(1 - \frac{C}{N}\right)^m.
\]
Hence,
\[
P_{\mathrm{full}}
=
1 - \Pr\bigl(\cup_{(i,j)} [\text{point }(i,j)\text{ not covered}]\bigr)
\;\ge\;
1 - N \,\left(1 - \frac{C}{N}\right)^m.
\]

A simple approach is to require the \emph{expected} number of uncovered points to be less than 1, which gives
\[
N \times \bigl(1 - \frac{C}{N}\bigr)^m \;<\; 1.
\]
Taking natural logarithms on both sides and using the approximation \(\ln(1 - x) \approx -x\) when \(x\) is small, we obtain
\[
\ln(N) \;-\; m \,\frac{C}{N} \;<\; 0 
\quad\Longrightarrow\quad
m \;>\; \frac{N}{C} \,\ln(N).
\]
Since \(m = \alpha N\), this yields
\[
\alpha \;>\; \frac{1}{C}\,\ln(N) \; =\;  \frac{1}{C}\,2\ln(p).
\]

\end{proof}

 \section{Understanding Model Behavior Using Dynamical Systems}

In this appendix, we present several more complex concepts in dynamic system modeling, employing purely formal notations to link the model with system optimization. Dynamical systems give us a new perspective on classification models. We can understand the model's characteristic markers as the state vector of the dynamical system. Under this understanding, progress measure can be regarded as the analysis of the state markers, so the Poincare section corresponds to local complexity, and Fourier decomposition corresponds to signal analysis.

\begin{definition}[Dynamical system]
\label{def1}
   The following form of equations are what we refer to as a dynamical system
   \begin{equation*}
        \frac{d x}{dt}=f( x,t;\mu),
   \end{equation*}
   \begin{equation*}
         x  \rightarrow g( x;\mu),
   \end{equation*}
   with $  x \in  U  \subset  \mathbb{R}^n$,  $ t \in  \mathbb{R}^1$, and $ \mu \in  V  \subset  \mathbb{R}^p$, where $ U$ and $ V$ are open sets in $  \mathbb{R}^n$ and $  \mathbb{R}^p$. 
\end{definition}

The core of dynamical systems research lies in understanding the properties of phase space and the long-term behavior of trajectories.

\begin{definition}[Phase space and equilibrium solution]
\label{def2}   
    For a dynamical system $ \frac{d x}{dt} = f( x,t;\mu)$, some interval  $  I  \subset  \mathbb{R}^1$ into $  \mathbb{R}^n$, which we represent as $  x: ( I   \rightarrow  \mathbb{R}^n) \quad t   \rightarrow  x(t)$
    with $ \frac{d x}{dt} = f( x,t;\mu)$ satisfied. The map $  x$ called a trajectory and the space of the curve called the phase space of the dynamical system. The long-term behavior of dynamical systems is often closely related to their equilibrium points. Equilibrium points are these $  x$ make $ f( x,t;\mu)=0$.

\end{definition}

We consider a deep learning model with \( n \) layers, where each layer corresponds to a manifold mapping \(   f_n(x;\omega_n) \) with $  \omega_n$ represents the parameter set of the \(  n \)-th layer. Here, the term "layer" refers to the component that induces the manifold mapping, and does not necessarily denote a specific layer. Let the loss function be denoted as \(  L \). Following the approach in this paper, we use measures to handle the dataset. We denote the embedding of the dataset, considering each token in the embedding layer as a high-dimensional vector in \(     \mathbb{R}^n \). We denote the empirical measure corresponding to these points in \(     \mathbb{R}^n \) as \(  \nu \).

\begin{definition}
    The empirical measure  $  \mu$  in \(   \mathbb{R}^n \) for a dataset with $  N$ data points is defined as:

\[
  \mu = \frac{1}{N}  \sum_{i=1}^{N} \delta_{  x_i},
\]

where \(   x_i \in   \mathbb{R}^n \) denotes the high-dimensional vectors corresponding to each data point, and \( \delta{  x_i} \) represents the Dirac measure centered at \(   x_i \).
\end{definition}

We denote the functional relationships involved in the backward process as \(  \partial L \) and \(   \partial f_n \), the model's training process is equivalent to the following equation:
\begin{equation*}
      \frac{d\nu}{dt}=\prod_{i=1}^{n}\partial f_i\partial L,
\end{equation*}
\begin{equation*}
      \frac{d\omega_k}{dt}=\prod_{i=k+1}^{n} \partial f_i \partial L.
\end{equation*}
Now, we still lack a method for the quantitative description of a task. We assume that the training (or pre-training) task of the model corresponds to a kernel function \(  K \) and a discriminative equation.

\begin{definition}
    \(  K:\Omega \times   \mathbb{R}^n   \mathbb{R}ightarrow   \mathbb{R} \) is referred to as the kernel function corresponding to the training task if
    \begin{equation*}
          \int_{  \mathbb{R}^n} K(x,\prod_{i=1}^{n}f_i \circ d\nu)=1,
    \end{equation*}
    with $  x \in \Omega$ means the model training has succeeded.
\end{definition}

The issues corresponding to the phenomena of delayed generalization and emergence can be formulated as follows:

\begin{definition}[Delayed generalization]
    For the family of empirical measures \(   \{ \nu_i \} _{i \in \{1,...,N\}}\),  delayed generalization means there exist a positive integer \( n < N \) such that:
    \begin{equation*}
          \int_{  \mathbb{R}^n} K(x,\prod_{i=1}^{n}f_i \circ d\nu_n)=1 \Leftrightarrow   \int_{  \mathbb{R}^n} K(x,\prod_{i=1}^{n}f_i \circ d\nu_N)=1.
    \end{equation*}
\end{definition}

\begin{definition}[Emergence]
    For the family of kernel functions \(   \{ K_i \} _{i \in \{1,...,N\}}\),  emergence means there exist a positive integer \( n < N \) such that:
    \begin{equation*}
           \int_{  \mathbb{R}^n} K_n(x,\prod_{i=1}^{n}f_i \circ d\nu)=1 \Leftrightarrow   \int_{  \mathbb{R}^n} K_i(x,\prod_{i=1}^{n}f_i \circ d\nu)=1,\forall i \in \{1...N\}.
    \end{equation*}
\end{definition}

Specifically for the task of arithmetic over prime numbers, we can design the signature to fall into the cone $V_i$ and its original image. Thus, the parameters updated by each sample point have overlaps. Therefore, if we assume that a time step corresponds to an epoch of parameter updates, it becomes challenging to formulate a complete function \(  f (x,t;\mu)\), so we establish the dynamical system model corresponding to each individual sample for the task.
\begin{equation}
\label{ode1}
    \frac{dW_{U}}{dt} = \sum_{i=0}^{p-1} \gamma_ie^{(i)}(Wx)^{T},
\end{equation}
\begin{equation}
    \label{ode2}
\frac{dW}{dt} = \sum_{i=0}^{d}(\gamma W_{U})_{i,:}e^{(i)}x^T,
\end{equation}
\begin{equation}
    \label{ode3}
    \frac{dx}{dt}=\gamma W_{U}W.
\end{equation}

with $e_i$ represents the $i_{th}$ standard unit vector, $x$ represents the vector obtained by concatenating the three embedding vectors corresponding to $  (i,j,p_+)$. This model can be generalized to general classification tasks. Additionally, let us denote the reachable region $  \Omega$ of the model’s updating process as the state space of the dynamical system. $  \Omega$ is a finite open set.

One of the things we've been emphasizing is that each sample of a test set determines a corresponding dynamical system. In this task, due to the simplicity of the model architecture, these dynamical systems have similar phase space structures (only one affine transformation from each other). Using this similarity, we can define a family of transformations that project a higher-dimensional phase space onto some lower-dimensional phase spaces ($ \mathbb{R}^n \rightarrow \mathbb{R}^m $) to simplify our study of the properties of the system.

For the \eqref{ode1}-\ref{ode3}, we define a mapping family $  \{ F_{c,i,j,p}\}$  dependent on the entire sample set.

\begin{definition}
\label{def3}
    For the sample $  (i,j,p_+)\rightarrow c$, we define a map $  F_{s,i,j,p},s=1,2,...,p$ that it maps $x,W,W_U$ in \eqref{ode1}-\ref{ode3} into three scalars $x,w,u$ with equations 
\begin{equation}
\label{ds1}
    \frac{dx}{dt}=-(xwu-\delta_{s,c})wu, 
\end{equation}
\begin{equation}
      \label{ds2}
    \frac{dw}{dt}=-(xwu-\delta_{s,c})xu, 
\end{equation}
\begin{equation}
          \label{ds3}
    \frac{du}{dt}=-(xwu-\delta_{s,c})xw.
  \end{equation}

The $  \delta_{s,c}$ mentioned above is a Kronecker symbol.
\end{definition}

\begin{theorem}
\label{th1}
    The map family defined above exists.
\end{theorem}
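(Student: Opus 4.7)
The plan is to construct the reduction $F_{s,i,j,p}$ explicitly as a triple of scalar projections of the full state $(W_U, W, x)$ and then verify, by direct differentiation, that the induced dynamics coincide with equations~\eqref{ds1}--\eqref{ds3}. The starting observation is structural: the target scalar system is the gradient flow of the quadratic $\ell_s(x,w,u)=\tfrac12(xwu-\delta_{s,c})^2$, since $-\partial_x\ell_s=-(xwu-\delta_{s,c})wu$ with symmetric identities for $w$ and $u$. Reading the right-hand sides of \eqref{ode1}--\eqref{ode3}, the full system is itself the gradient flow of the MSE loss $L(W_U,W,x)=\tfrac12\|W_U W x - e^{(c)}\|^2$, with $\gamma$ identified with the residual $e^{(c)}-W_U W x$. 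Hence establishing the theorem amounts to exhibiting a rank-one channel along which the matrix-valued flow projects exactly onto the scalar flow of $\ell_s$.

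To build such a channel, fix unit vectors $\xi$ in the input space and $\eta$ in the hidden space, and define
\[
x := \langle x_{\mathrm{full}},\xi\rangle,\qquad w := \eta^{\top} W_{\mathrm{full}}\,\xi,\qquad u := (e^{(s)})^{\top} W_{U,\mathrm{full}}\,\eta.
\]
With these choices, the $s$-th coordinate of the bilinear prediction, restricted to the $(\xi,\eta)$ channel, is precisely $xwu$, so the residual at output index $s$ collapses to $\gamma_s = \delta_{s,c}-xwu$. The map $F_{s,i,j,p}$ is then defined as the composition of the sample embedding $(i,j,p_+)\mapsto x_{\mathrm{full}}$ with the three scalar projections above. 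Differentiating $x$, $w$, and $u$ along \eqref{ode1}--\eqref{ode3} and substituting $\gamma_s = \delta_{s,c}-xwu$ yields the three target equations by direct calculation; for instance, $\dot u = (e^{(s)})^{\top} \dot W_U\,\eta = \gamma_s \,(\eta^{\top} W x) = -(xwu-\delta_{s,c})\,xw$, which is exactly \eqref{ds3}.

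The main obstacle I anticipate is decoupling the chosen rank-one channel from the orthogonal directions, because the full flow is not a priori diagonal in the basis determined by $\xi$, $\eta$, and $e^{(s)}$, so the projected derivatives could in principle pick up cross terms from the orthogonal components of $W_U$ and $W$. I would handle this by restricting $F_{s,i,j,p}$ to the invariant submanifold on which those orthogonal components vanish. This submanifold is nonempty and forward-invariant because the MSE loss is separable across output coordinates and the bilinear form $W_U W$ preserves rank-one factorizations, so an initialization supported on the single $(\xi,\eta)$ channel remains so under the flow. On this manifold the three scalar projections form a closed autonomous system matching \eqref{ds1}--\eqref{ds3}, and the map family exists by construction.
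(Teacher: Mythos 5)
Your construction is a genuinely different route from the paper's. The paper establishes existence topologically: it invokes Urysohn's Lemma to build continuous scalar functions on the ambient Euclidean spaces that take the value $1$ on a compact sub-region of the target cone $V_c$ and $0$ on the complement, and then reads off the scalars $uwx$, $wx$, $x$ as the values of these functions on $W_U W x$, $Wx$, and $x$. Your proposal instead works linear-algebraically, projecting the matrix state onto a rank-one channel $(\xi,\eta,e^{(s)})$ so that the full MSE gradient flow visibly collapses to the scalar gradient flow of $\tfrac12(xwu-\delta_{s,c})^2$. The structural observation that both the full system and the target system are gradient flows is correct and nicely clarifying, and your derivative calculation $\dot u = \gamma_s\,\eta^\top W x_{\mathrm{full}}$ is sound on the chosen channel.

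There is, however, a genuine gap in the invariance step, and it is where the whole construction rests. First, the theorem asks for a map defined on the states $x,W,W_U$ that the actual dynamics produces; restricting $F_{s,i,j,p}$ to a rank-one invariant submanifold gives you a map that agrees with \eqref{ds1}--\eqref{ds3} only on a measure-zero slice of state space, which a generic initialization of the network does not lie on and which the trajectory need not ever visit. Second, and more concretely, the claimed forward-invariance fails for $s\ne c$: the residual $\gamma = e^{(c)} - W_U W x_{\mathrm{full}}$ always carries a nonzero $e^{(c)}$-component away from equilibrium, so $\dot W_U = \gamma\,(Wx)^\top$ always injects mass into the $e^{(c)}$-row of $W_U$. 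If you initialize $W_U$ supported only on the $e^{(s)}$-row with $s\ne c$, the flow immediately leaves that submanifold. The only way to rescue the $s\ne c$ cases inside your framework is to take the submanifold adapted to $s=c$ and choose $\xi_s\perp\xi_c$ so that $x_s\equiv u_s\equiv 0$, which does satisfy \eqref{ds1}--\eqref{ds3} for $s\ne c$ but only trivially, with all scalars pinned at a fixed point. To close the gap you would need either to prove that the actual trajectory is attracted to (and asymptotically tracked by) a rank-one channel, or to abandon linear projections and produce, as the paper does, nonlinear scalar functions defined on all of state space — although it is worth noting that the paper itself verifies only the endpoint values $x=w=u\in\{0,1\}$ and does not actually check that its Urysohn functions satisfy the differential equations along trajectories, so the paper's own argument leaves the same verification incomplete.
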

The key idea of this theorem lies in quantifying the changes in task characteristics with model updates into a three-dimensional dynamical system. For the simplified dynamical system, we consider the properties of its trajectories within the unit cube \([0,1]^3\).

\begin{theorem}
\label{th2}
    The non-trivial attractors (equilibrium points) of the dynamical system defined by \eqref{ds1}-\ref{ds3} are all stable.
\end{theorem}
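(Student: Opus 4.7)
The plan is to exploit the (nearly) gradient structure of \eqref{ds1}--\eqref{ds3} and conclude via a Lyapunov / LaSalle argument. First I would characterize the equilibrium set by setting the three right-hand sides to zero. When $\delta_{s,c}=0$, every equilibrium is degenerate in the sense that at least two of $x,w,u$ must vanish; there are no non-trivial attractors and the claim is vacuous. When $\delta_{s,c}=1$, the same degenerate coordinate loci persist, but additionally the surface $\{(x,w,u):xwu=1\}$ appears, and this two-dimensional manifold is precisely the set of non-trivial equilibria whose stability the theorem asserts.

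Second, I would introduce the candidate Lyapunov function $V(x,w,u)=\tfrac12(xwu-\delta_{s,c})^2$. A direct differentiation along solutions of \eqref{ds1}--\eqref{ds3} and substitution of the three right-hand sides gives
\[
\dot V \;=\; (xwu-\delta_{s,c})(\dot x\,wu + x\dot w\,u + xw\dot u) \;=\; -(xwu-\delta_{s,c})^2\bigl(w^2u^2+x^2u^2+x^2w^2\bigr) \;\le\; 0,
\]
with equality exactly on $\{xwu=\delta_{s,c}\}$ together with the coordinate loci where at least two of $x,w,u$ vanish. In particular, \eqref{ds1}--\eqref{ds3} is a reweighted gradient flow on $V$, and $V$ is a strict Lyapunov function away from those coordinate degeneracies.

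Third, I would fix an arbitrary non-trivial equilibrium $(x_0,w_0,u_0)$ satisfying $x_0 w_0 u_0=1$. Since $x_0,w_0,u_0$ are all nonzero, there is a compact neighborhood $K$ of this point disjoint from every coordinate plane; on $K$ the factor $w^2u^2+x^2u^2+x^2w^2$ is bounded below by a positive constant, so the sublevel sets $\{V\le\eta\}\cap K$ are forward-invariant and can be made arbitrarily small in diameter by taking $\eta$ small. This yields stability in the sense of Lyapunov. Applying LaSalle's invariance principle, the $\omega$-limit of any trajectory starting in $K$ lies in the largest invariant subset of $\{\dot V=0\}\cap K$, which equals $\{xwu=1\}\cap K$ and consists entirely of equilibria; hence the trajectory converges to some (possibly nearby) point of the surface $\{xwu=1\}$.

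The main obstacle is that the non-trivial equilibria form a two-dimensional continuum rather than isolated points, so the Jacobian of the vector field at any such point must have a zero eigenvalue (with eigenspace tangent to the surface $\{xwu=1\}$), and Hartman--Grobman alone gives no information. The Lyapunov/LaSalle argument bypasses this, but it relies on the separation between the non-trivial surface and the trivial coordinate loci; this separation is exactly what legitimizes the choice of $K$ above and is implicitly the content of the word ``non-trivial'' in the statement. A minor technical point that also needs attention is uniqueness of forward solutions on $K$: since the vector field is polynomial, this is automatic, so no further regularity argument is required.
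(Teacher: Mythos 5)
Your approach is a genuine variant of the paper's, and the computation of $\dot V$ is clean and correct, but there is a real gap in the stability step.

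\textbf{Comparison with the paper.} The paper also proceeds by Lyapunov, but with a different candidate: $V(x,w,u)=(x-1)^2+(w-1)^2+(u-1)^2$, which gives $\dot V = 2(1-xwu)(3xwu-wu-xu-xw)$. Inside the unit cube both factors have opposite signs, so $\dot V\le 0$, and because this $V$ is positive definite \emph{with respect to the point $(1,1,1)$} (its sublevel sets are small balls around $(1,1,1)$), Lyapunov stability of $(1,1,1)$ follows immediately. Your $V=\tfrac12(xwu-\delta_{s,c})^2$ is arguably more natural (it makes the gradient-flow structure $\dot{\mathbf q}=-(xwu-\delta_{s,c})\nabla(xwu)$ explicit, and the inequality $\dot V\le0$ is manifest, unsigned, and valid on all of $\mathbb R^3$, not just the cube), but it has a crucial drawback that the paper's choice avoids.

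\textbf{The gap.} Your $V$ vanishes identically on the entire surface $\{xwu=1\}$. Consequently, for any fixed compact $K$ meeting that surface, the sets $\{V\le\eta\}\cap K$ are \emph{tubular neighborhoods} of a two-dimensional patch of the surface; as $\eta\to0$ they shrink only in the transverse direction, and their diameter stays bounded below by the diameter of $\{xwu=1\}\cap K$. So the claim that they ``can be made arbitrarily small in diameter by taking $\eta$ small'' is false, and the Lyapunov stability of the chosen equilibrium $(x_0,w_0,u_0)$ does not follow from $V$ being small and non-increasing. (Relatedly, $\{V\le\eta\}\cap K$ being forward-invariant is not automatic either: $V$ decreasing keeps you in $\{V\le\eta\}$, but you must separately prevent the trajectory from sliding along the manifold and out of $K$.) To close the gap you either need a Lyapunov function that is positive definite at the equilibrium point itself, as the paper uses, or you need to invoke the first integrals of the flow: away from the coordinate planes, $\dot x/x=\dot w/w=\dot u/u$, so $x/w$ and $w/u$ are conserved, trajectories are confined to rays through the origin, and a ray near the diagonal meets $\{xwu=1\}$ at a unique point near $(1,1,1)$; combined with $V\to0$ this pins the $\omega$-limit close to the starting equilibrium. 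One minor point: for $\delta_{s,c}=0$, an equilibrium only requires one of $x,w,u$ to vanish (then $xwu=0$ kills all three right-hand sides), not ``at least two''; the vacuity conclusion still holds.
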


\begin{proof}
Theorem~\ref{th1} is an existence theorem, which is used to ensure the existence of the framework for our method.

We employ Urysohn’s Lemma from topology to establish the existence result here. Urysohn’s Lemma is a classical tool in topology, often used to demonstrate the construction of specific types of continuous functions within certain topological spaces.
    \begin{lemma}[Urysohn's lemma]
 \label{le1}
        Let $X$ be a normal topological space, and let $A$ and $B$ be two disjoint closed subsets of $X$. Then, there exists a continuous function $ f: X \rightarrow [0,1]$ such that $f(A)=\{0\}$ and $f(B)=\{1\}$
    \end{lemma}

Remember that $V_c=\{x|x_c > x_i, \forall  i \neq c,1 \leq i \leq p\}$, $W_uWx$ is a vector in $\mathbb{R}^{p}$. After the model training is completed, we consider the convex hull of all vectors that lie within the cone \(V_c\), denoted as \(B_c\). And the complement of $V_c$  in $ \bar{\Omega}$ denoted as $A_c$.

Since $  B_c \subset\subset V_c$, \(  A_c\) and \(  B_c\) are two separable closed sets. Moreover, since every Euclidean space is a normal space, we can invoke Urysohn's lemma. Thus, there exists a continuous function \(  f : \mathbb{R}^p \rightarrow [0,1]\) such that \(   f(A_c) = \{0\} ,f(B_c)=\{1\}\). For a vector \(   W_uWx \) in $  \mathbb{R}^p$, we denote \(   f(W_uWx) = uwx \).

Now, we will employ a similar approach to derive the complete mapping that we require. Consider the vector space $  \mathbb{R}^n$ and closed sets $  A_{c}'$,$  B_{c}'$ such that $  W_uA_{c}'=A_c$, $W_uB_{c}'=B_c$. Using Urysohn's lemma like above we obtain there exists a continuous function \(  f' : \mathbb{R}^n \rightarrow [0,1]\) such that \(   f'(A_c') = \{0\} ,f'(B_c')=\{1\}\). For a vector \(   Wx \) in $  \mathbb{R}^n$, we denote \(   f'(W_uWx) = wx \). Similarly, we can obtain that the function $  f'': \mathbb{R}^n \rightarrow [0,1]$ and denote $  f''(x)=x$.

According to our definition, when the model makes a correct prediction, \( x = w = u = 1 \); when the model makes an incorrect prediction, \( x = 0 \). This satisfies the \eqref{ds1}-\ref{ds3} when \( s = c \). When \( s \neq c \), we can use a similar approach by extracting a closed set from the high-dimensional cone \(  V_s \) and applying the Urysohn lemma to it along with the remaining set. It should be noted that this time the complement set is mapped to 1, while the subset is mapped to 0.

Theorem~\ref{th2} aims to demonstrate that the compressed dynamical system within the unit cube exhibits simple trajectory properties, thereby indicating that the complex, high-dimensional nature of the dynamical system is not a phenomenon that requires specific consideration for this task.

We take the following system of equations as an example, with similar reasoning applying to other cases.
\begin{align}
      \frac{dx}{dt}=(1-xwu)wu, \\
    \frac{dw}{dt}=(1-xwu)xu,  \\
    \frac{du}{dt}=(1-xwu)xw.
\end{align}
We first provide the precise definition and the criteria for determining the stability of equilibrium points in dynamical systems.

\begin{definition}
Consider an equilibrium point \(   x^* \) of a dynamical system.

\begin{itemize}
    \item \(  x^* \) is said to be stable if, for any \(  \epsilon > 0\), there exists a \(  \delta > 0\) such that if \(   \|x(0) - x^*\| < \delta \), then \(   \|x(t) - x^*\| < \epsilon \) for all \(   t \geq 0 \).
    
    \item \(  x^* \) is said to be asymptotically stable if it is stable and there exists a \(  \delta' > 0\) such that if \(  \|x(0) - x^*\| < \delta' \), then \(   \lim_{t \to \infty} x(t) = x^* \).

    \item \( x^* \) is said to be unstable if it is not stable; that is, there exists an \(  \epsilon > 0\) such that for any \(  \delta > 0\), there exists an initial condition \(   x(0) \) with \(   \|x(0) - x^*\| < \delta \) but \(   \|x(t) - x^*\| \geq \epsilon \) for some \(   t \geq 0 \).
\end{itemize}

\end{definition}

To determine the stability category of an equilibrium point, the Lyapunov method is often used.

\begin{theorem}[ Lyapunov's stability theorem]
  Consider a dynamical system described by:  \[
\frac{dx}{dt} = f(x), \quad x \in \mathbb{R}^n,
\]

where \(x^* \) is an equilibrium point, i.e., \( f(x^*) = 0 \). To determine the stability of the equilibrium point \( x^* \), one constructs a Lyapunov function \( V: \mathbb{R}^n \to \mathbb{R}\) that satisfies the following conditions:
\begin{enumerate}
    \item \( V(x) > 0 \) for all \( x \neq x^* \), and \( V(x^*) = 0 \) (positive definiteness).
    \item The time derivative of \( V(x) \) along the trajectories of the system, given by \( \dot{V}(x) = \nabla V \cdot f(x) \), satisfies \(  \dot{V}(x) \leq 0 \) (negative semi-definiteness).
\end{enumerate}

Then:
\begin{itemize}
    \item If there exists a Lyapunov function such that \(  \dot{V}(x) < 0 \) for all \( x \neq x^* \), then the equilibrium point \(x^* \) is locally asymptotically stable.
    \item If \( \dot{V}(x) \leq 0 \), then the equilibrium point is stable in the sense of Lyapunov, but not necessarily asymptotically stable.
    \item If no such Lyapunov function can be found, alternative methods must be used to analyze stability.
\end{itemize}
\end{theorem}

For our equations we construct the function $  V(x,w,u)$ as follows:
\begin{equation}
      V(x,w,u)=(x-1)^2+(w-1)^2+(u-1)^2.
\end{equation}
It is obvious that $  V>0$ for all points in open unit cube and $  V=0$ when the system takes (1,1,1). Now we only need to verify that \(   \dot{V} \) is semi-positive definite within the unit cube.
\begin{equation}
    \dot{V}= (3xwu-wu-xu-xw)(1-xwu)
\end{equation}
It is easy to verify that all of its leading principal minors are non-positive, hence it is non-positive definite. So (1,1,1) is a stable equilibrium point.

We can intuitively observe the simplicity of the structure of this dynamical system through visualization. We present the visualization results in Figure~\ref{vision}.

\begin{figure}[ht]
    \centering
    \setlength{\abovecaptionskip}{0.cm}
    \includegraphics[width = \textwidth ]{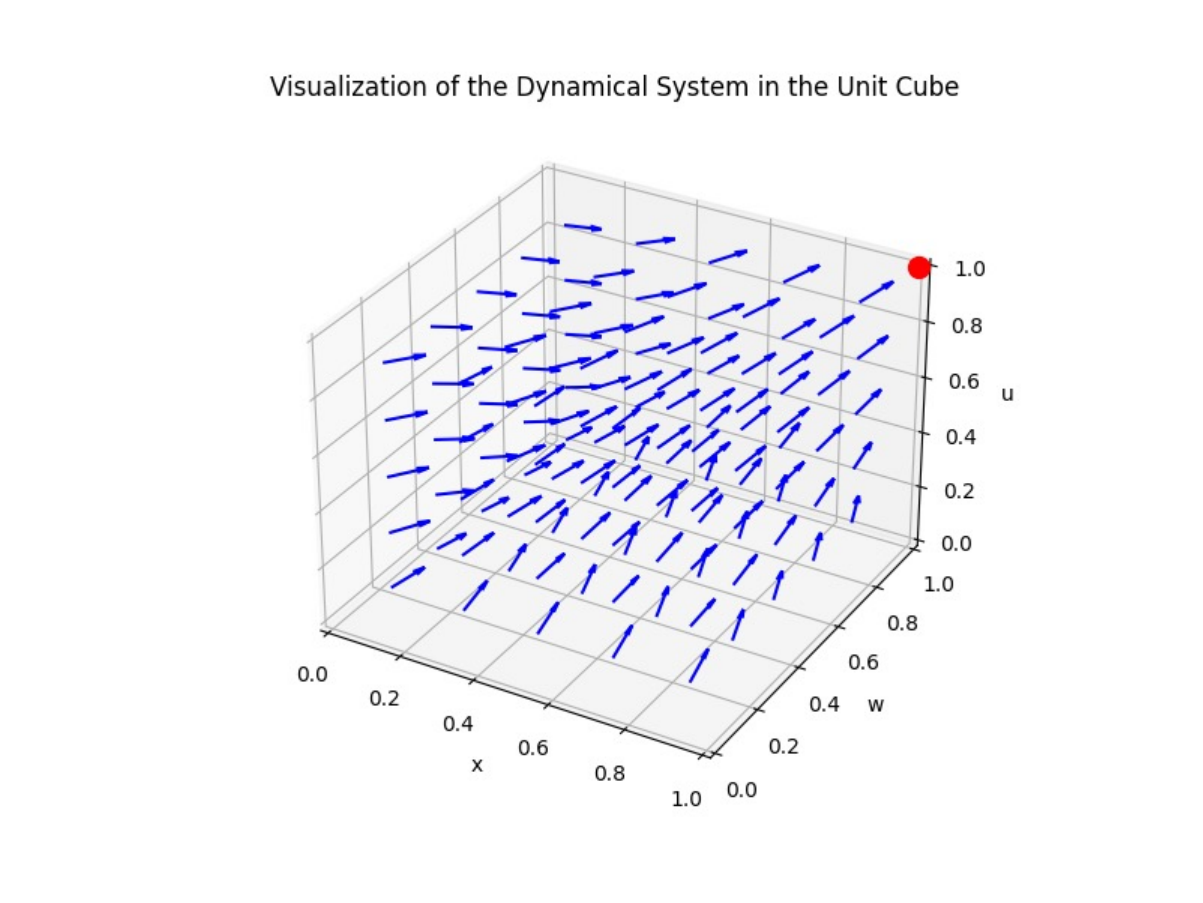}

    \caption{
    This figure directly show the phenomenon of points within the unit cube moving towards the equilibrium point (1,1,1) which is highlighted.
    }
    \label{vision}
\end{figure}

\end{proof}

The above theorem guarantees the existence of grokking.

\section{Supplementary Experiments}
\label{support}
\subsection{Extra Discussion on Upper Limit of Accuracy }
Throughout the main text, our discussion of accuracy improvement has assumed that the model's embedding dimension is sufficiently large to capture all necessary information. Naturally, reducing this dimensionality can severely compromise the model's stability, particularly when the proportion of randomly selected training sets approaches the critical threshold described in Theorem~\ref{thm:simpledis}. We therefore explore the consequences of this instability. Note that the experimental results here are highly dependent on the randomly selected training set structure and may be difficult to reproduce. In what follows, \(d_{model}\) denotes the model’s embedding dimension, i.e., the token vector length and we take $I=\{0,1,...,p-1\}$,$J=\{k,k+1,...,k+p-1\}$.

First, we set \(p = 97\) and \(k = 85\). Under these conditions, grokking occurred more than 20{,}000 epochs earlier with \(d_{model} = 256\) compared to \(d_{model} = 128\). Next, we set \(p = 47\) and \(k = 0\) and observed that grokking took place over 10{,}000 epochs sooner with \(d_{model} = 256\) than with \(d_{model} = 64\). These findings are illustrated in Figure~\ref{se1}.

\begin{figure}[ht]
    \centering
    \setlength{\abovecaptionskip}{0.cm}
    \includegraphics[width = \textwidth ]{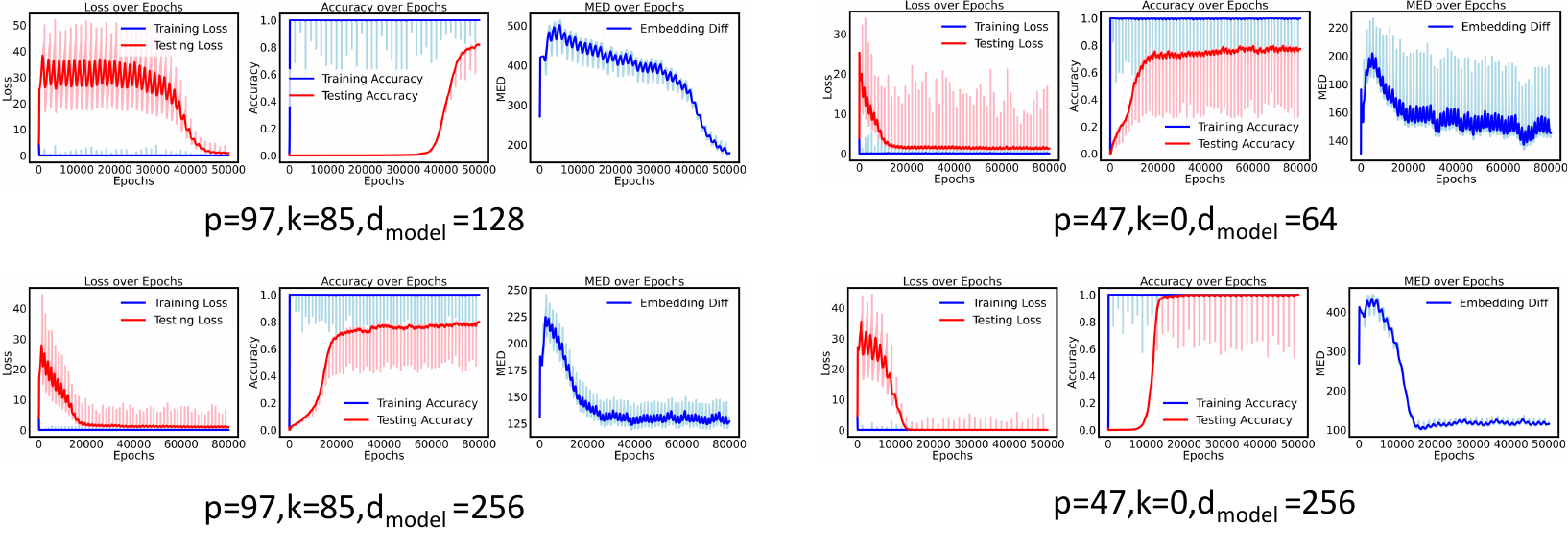}
    \caption{(Left) We selected $ p=97$ and $  k=85$ and we found that grokking occurred over 20,000 epochs earlier with \(  d_{model} = 256\) compared to \(  d_{model} = 128\). (Right) We selected $  p=47$ and $  k=0$ and we found that grokking occurred over 10,000 epochs earlier with $  d_{model} = 256$ compared to \(  d_{model} = 64\). 
    }
    \label{se1}
\end{figure}

Next, we investigated the scenario \(p = 43\) and \(k = 0\). In contrast to the previous experiments, it is evident that increasing the model's embedding dimension raises the final upper bound of test accuracy. Specifically, when the embedding dimension is set to 128, the test accuracy fluctuates around 0.7, but when the dimension is increased to 512, the test accuracy eventually reaches 0.99. These results are shown in Figure~\ref{se2}.

\begin{figure}[t!]
    \centering
    \setlength{\abovecaptionskip}{0.cm}
    \includegraphics[width = \textwidth ]{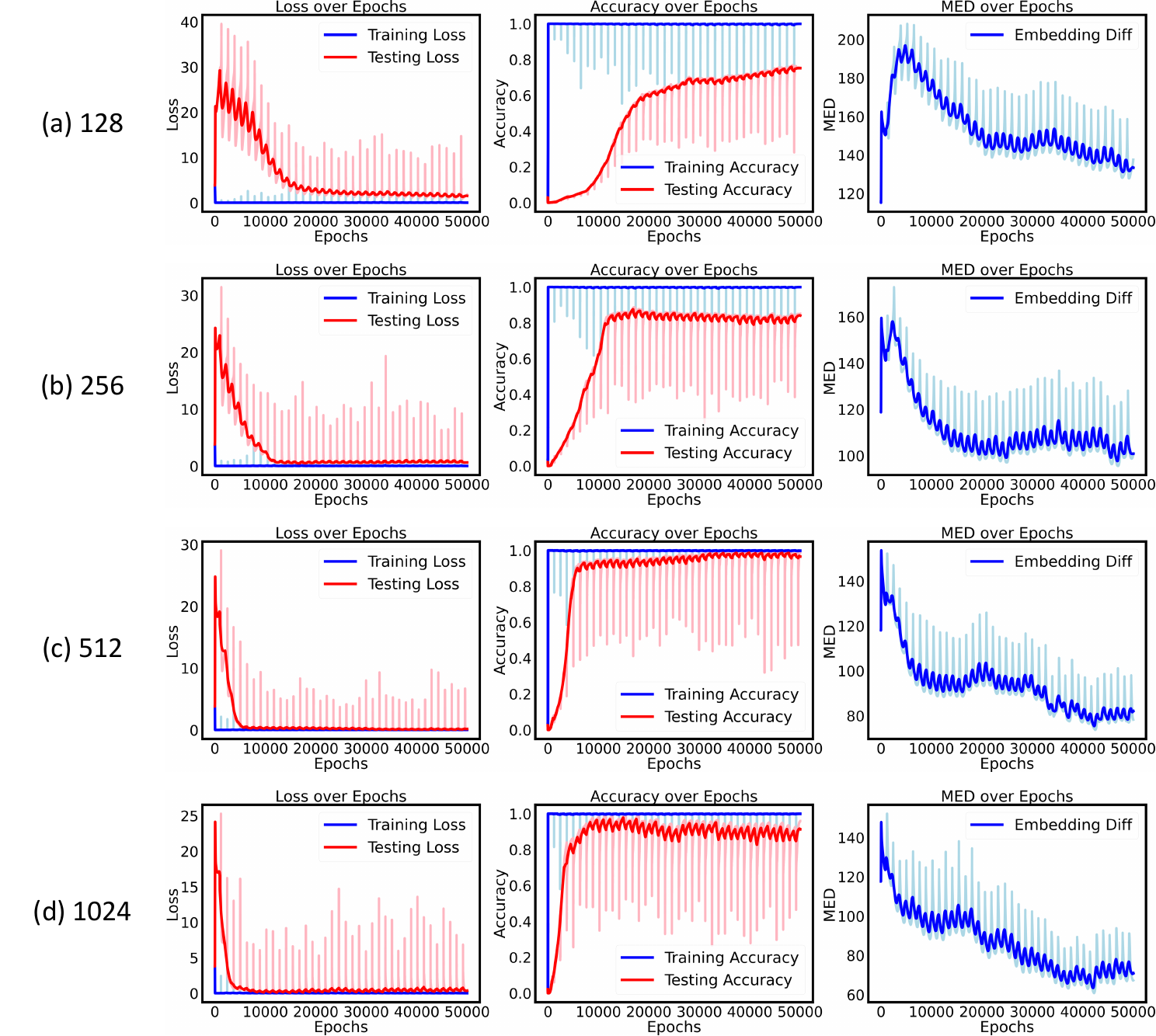}
   
    \caption{
    This figure aims to compare the impact of different embedding dimensions on the speed and upper limit of generalization. We selected the same number of epochs for all cases. We found that although increasing the embedding dimension from 512 to 1024 resulted in a decrease in the lower bound of the MED from around 80 to around 60, there was no significant improvement in generalization ability.
    }
    \label{se2}
\end{figure}

However, the results mentioned above do not represent boundary cases. To observe behavior at the boundary, we continuously increased the value of \(k\) until grokking ceased to appear at lower embedding dimensions. Then, we further increased the embedding dimension. The outcome is striking: enlarging the embedding dimension led grokking to re-emerge. These findings are presented in Figure~\ref{se3}.

This result is intuitive: a larger embedding dimension expands the recognition region of the cone, and there are more options for homogenization on a co-dimension-1 manifold. Consequently, the grokking phenomenon becomes more likely to occur.
\begin{figure}[t!]
    \centering
    \setlength{\abovecaptionskip}{0.cm}
    \includegraphics[width = \textwidth ]{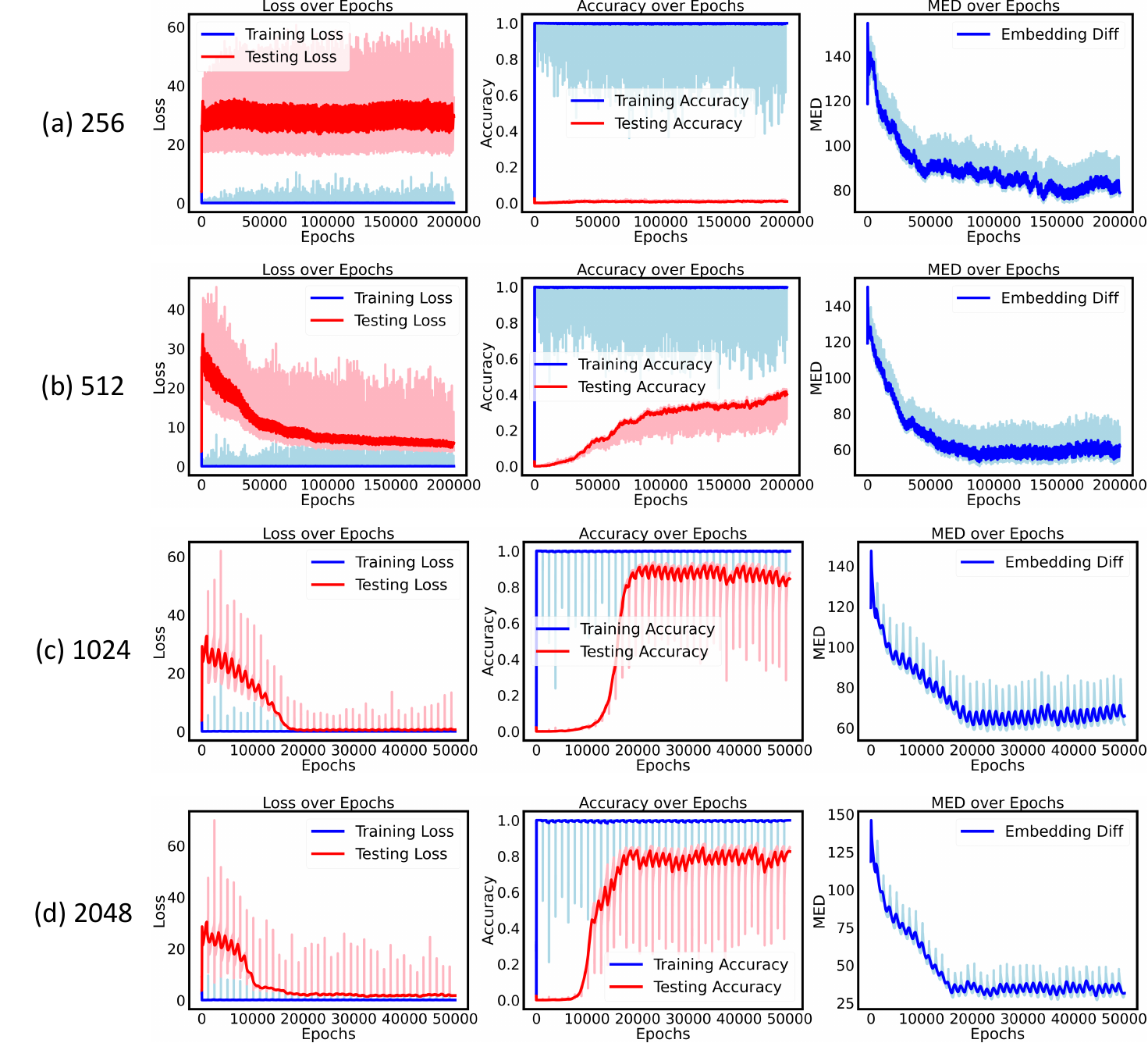}
    
    \caption{
    We discarded the case with an embedding dimension of 128 because it was identical to the plot for the 256-dimensional case. Note that we extended the number of epochs for the 256 and 512 dimensions to 200,000 to demonstrate that they had already reached the upper limit of their generalization abilities. When the embedding dimension was increased from 1024 to 2048, we observed a similar outcome to the previous experiment, with no noticeable change.
    }
    \label{se3}
\end{figure}

\subsection{Other Operations over Prime Fields}

We have discussed the impact of choosing different $f(i,j)$ on the upper bound of accuracy, and have also described that this does not affect the ability of the MED function to monitor the progress of parameter updates due to weight decay, and our subsequent experiments report this result. We selected five cases to represent this: \(  x - y \), $  xy$, \(  x^2 + y^2\), \(  x^2 + xy + y^2\), \(  x^3 + y^3\), and \(  x^3 + xy + y^3\). Through the experiments, we found that our MED function is still exceptionally capable of tracking test loss.  For detailed experiments, please refer to Figure\ref{se4}-\ref{se11}
\begin{figure}[!ht]
    \centering
    \setlength{\abovecaptionskip}{0.cm}
    \includegraphics[width = \textwidth ]{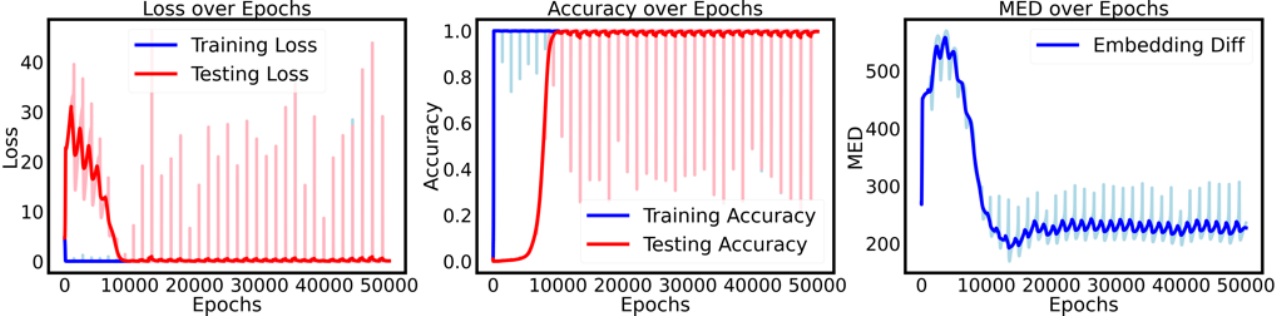}
    
    \caption{
    $  p=97$, $  k=0$, $  d_{model}$=128, $  x-y$, the training set proportion is set to 0.3.
    }
    \label{se4}
\end{figure}
\begin{figure}[!ht]
    \centering
    \setlength{\abovecaptionskip}{0.cm}
    \includegraphics[width = \textwidth ]{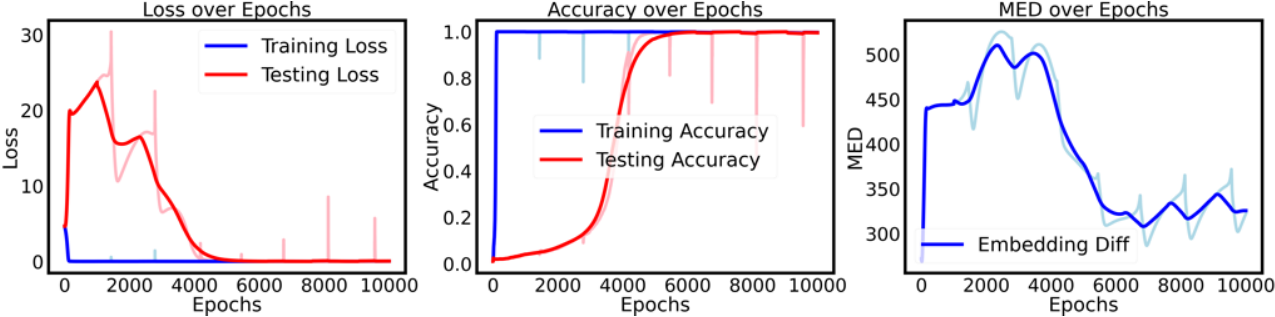}
 
    \caption{
    $  p=97$, $  k=0$, $  d_{model}$=128, $  xy$, the training set proportion is set to 0.3.
    }
    \label{se5}
\end{figure}
\begin{figure}[!ht]
    \centering
    \setlength{\abovecaptionskip}{0.cm}
    \includegraphics[width = \textwidth ]{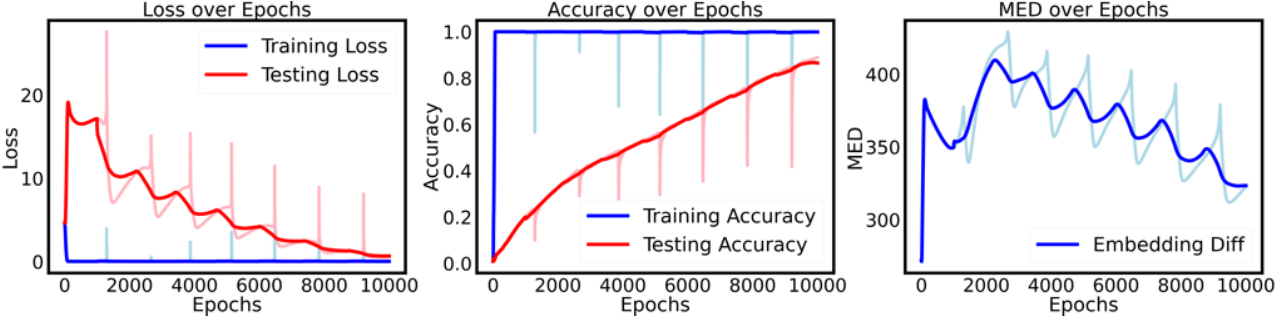}
      \caption{
    $  p=97$, $  k=0$, $  d_{model}$=256, $  x^2+y^2$, the training set proportion is set to 0.2.
    }
    \label{se6}
\end{figure}
\begin{figure}[!ht]
    \centering
    \setlength{\abovecaptionskip}{0.cm}
    \includegraphics[width = \textwidth ]{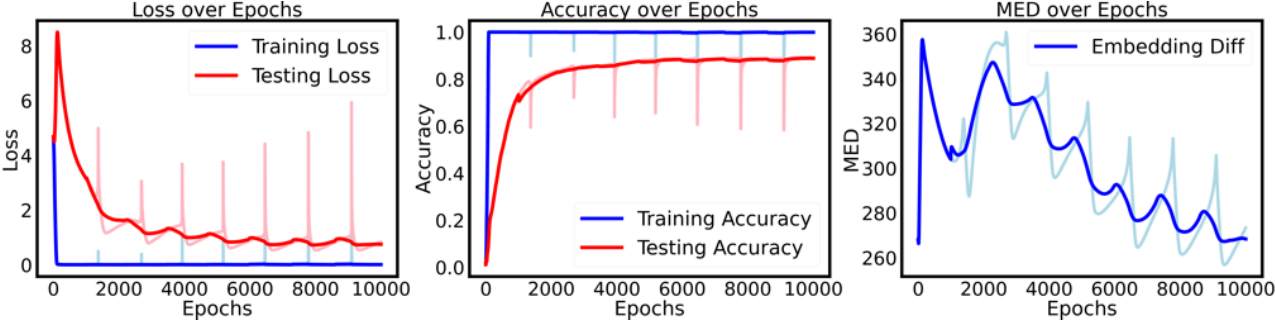}
    
    \caption{
    $  p=97$, $  k=0$, $  d_{model}$=128, $  x^3+y^3$, the training set proportion is set to 0.2.
    }
    \label{se7}
\end{figure}
\begin{figure}[!ht]
    \centering
    \setlength{\abovecaptionskip}{0.cm}
    \includegraphics[width = \textwidth ]{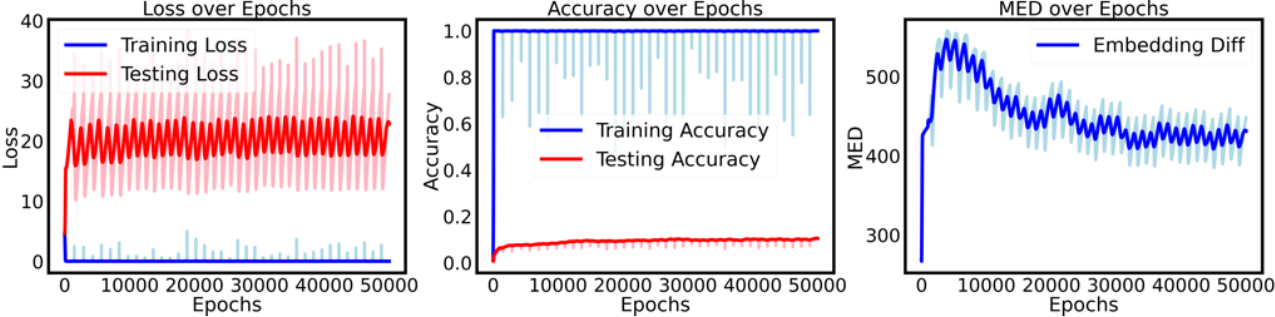}
    
    \caption{
    $  p=97$, $  k=0$, $  d_{model}$=128, $  x^2+xy+y^2$, the training set proportion is set to 0.3.
    }
    \label{se8}
\end{figure}
\begin{figure}[!ht]
    \centering
    \setlength{\abovecaptionskip}{0.cm}
    \includegraphics[width = \textwidth ]{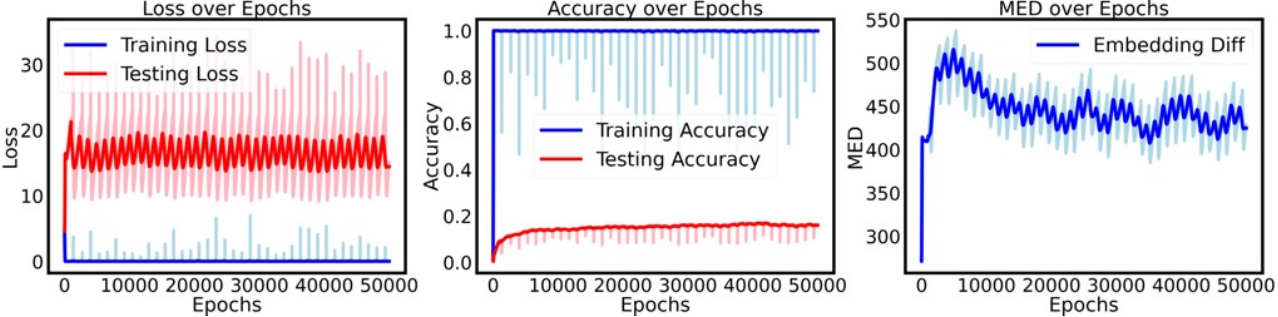}

    \caption{
    $  p=97$, $  k=0$, $  d_{model}$=256, $  x^2+xy+y^2$, the training set proportion is set to 0.3.
    }
    \label{se9}
\end{figure}
\begin{figure}[!ht]
    \centering
    \setlength{\abovecaptionskip}{0.cm}
    \includegraphics[width = \textwidth ]{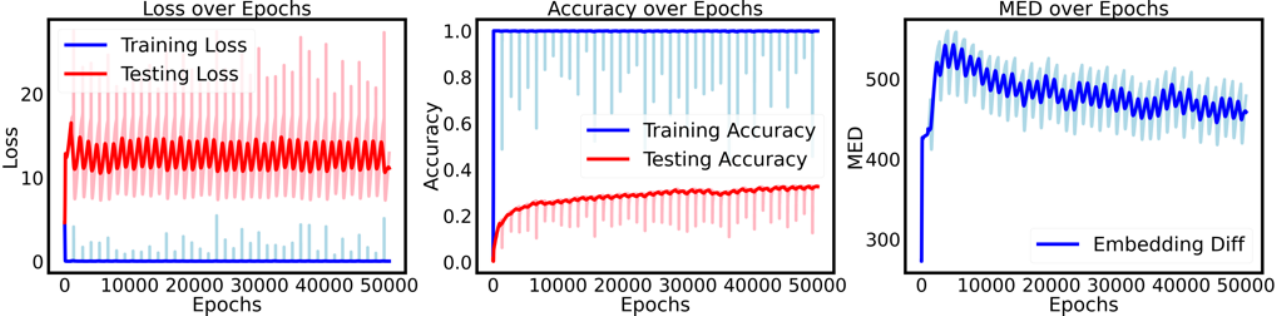}
  
    \caption{
    $  p=97$, $  k=0$, $  d_{model}$=256, $  x^2+xy+y^2$, the training set proportion is set to 0.4.
    }
    \label{se10}
\end{figure}
\begin{figure}[!ht]
    \centering
    \setlength{\abovecaptionskip}{0.cm}
    \includegraphics[width = \textwidth ]{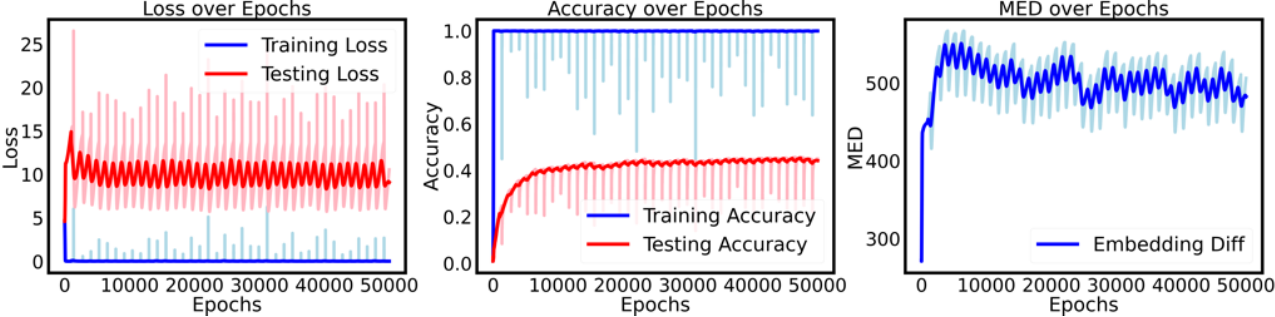}
    \caption{
    $  p=97$, $  k=0$, $  d_{model}$=256, $  x^3+xy+y^3$, the training set proportion is set to 0.5.
    }
    \label{se11}
\end{figure}

\FloatBarrier
\newpage
\section{Can Special Initialization Speed up Grokking?}
After thoroughly understanding the causes and monitoring methods of grokking, we will talk on a specific question: can special initialization speed up grokking? Structured initialization representation can be regarded as a special regularization technique, which is a consensus in representation learning.

Since the occurrence of grokking implies thorough learning of the uniformity in the embedding space, our approach is to adjust the initial embedding values to facilitate this learning process. Specifically, we use a circulant matrix generated from a random vector to replace the embedding matrix, as expressed below:

\begin{equation}
    W_E = \text{Toeplitz} (v,v'),
\end{equation}
with $\text{Toeplitz}$ represents the operation of generating a Toeplitz matrix, and \( v \) is a random vector. We have presented the experimental results in Figure~\ref{acceleration}

\begin{figure}[ht]
    \centering
    \setlength{\abovecaptionskip}{0.cm}
    \includegraphics[width = \textwidth ]{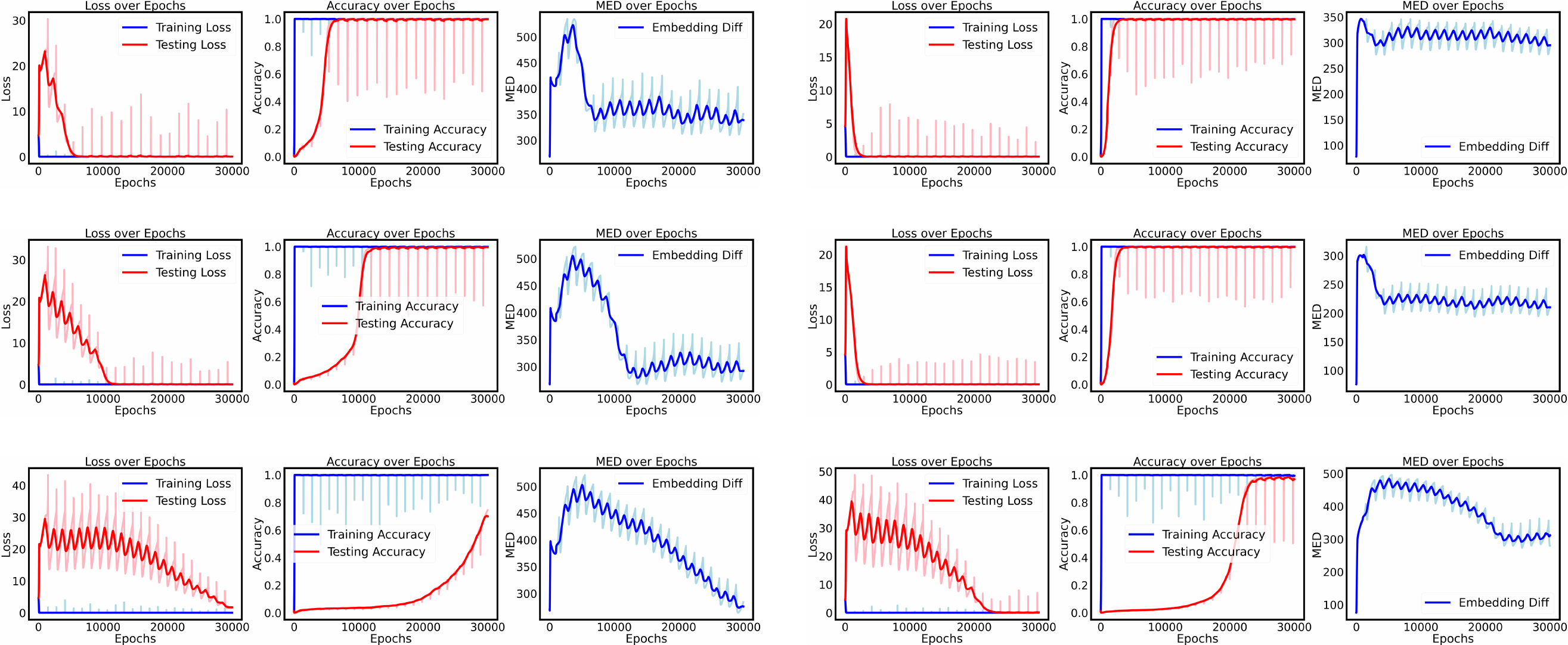}.
    \caption{
    The experiments were conducted with \(   p = 97 \), \(   k = 0 \), and \(   d_{model} = 128 \). The left side shows the results with random embeddings, while the right side displays the results using the improved embedding method. From top to bottom, the training set proportions are 0.3, 0.25, and 0.2, respectively. The number of epochs required for grokking to begin on the right side is only half of that on the left side.    }
    \label{acceleration}
\end{figure}

\end{document}